\documentclass{article} 
\usepackage[preprint]{icml2026}

\usepackage{pgfplots}
\pgfplotsset{compat=1.18}
\usepgfplotslibrary{fillbetween}
\usepackage{algorithm}
\usepackage{algorithmic}
\usepackage{amsthm, amssymb} 
\usepackage{amsmath}
\usepackage{soul}
\usepackage{ulem}
\usepackage{float} 
\usepackage{subcaption}
\usepackage{fancyhdr}
\usepackage{hyperref}
\usepackage{graphicx}
\usepackage{url}
\usepackage{arydshln}  
\usepackage{xcolor}
\usepackage{booktabs,multirow}
\usepackage[table]{xcolor} 
\newcommand{\best}[1]{\textbf{#1}}
\usepackage{enumitem}
\usepackage[
    font=small,             
    justification=raggedright, 
    singlelinecheck=true      
]{caption}

\newtheorem{theorem}{Theorem}
\newtheorem{corollary}{Corollary}
\newtheorem{remark}{Remark}

\begin{document}

\twocolumn[
  \icmltitle{Revisiting Adaptive Rounding with Vectorized Reparameterization for LLM Quantization}



  \icmlsetsymbol{equal}{*}

  \begin{icmlauthorlist}
    \icmlauthor{Yuli Zhou}{eth}
    \icmlauthor{Qingxuan Chen}{eth}
    \icmlauthor{Luca Benini}{eth,bologna}
    \icmlauthor{Guolei Sun}{nankai}
    \icmlauthor{Yawei Li}{eth}
  \end{icmlauthorlist}

  \icmlaffiliation{eth}{ETH Zurich, Zurich, Switzerland}
  \icmlaffiliation{nankai}{Nankai University, Tianjin, China}
  \icmlaffiliation{bologna}{University of Bologna, Bologna, Italy}

  \icmlcorrespondingauthor{Yawei Li}{li.yawei.ai@gmail.com}

  \icmlkeywords{Machine Learning, ICML}

  \vskip 0.3in
]

\printAffiliationsAndNotice{} 

\begin{abstract}
Adaptive Rounding has emerged as an alternative to round-to-nearest (RTN) for post-training quantization by enabling cross-element error cancellation. Yet, dense and element-wise rounding matrices are prohibitively expensive for billion-parameter large language models (LLMs). We revisit adaptive rounding from an efficiency perspective and propose \textbf{VQRound}, a parameter-efficient optimization framework that reparameterizes the rounding matrix into a compact codebook.
Unlike low-rank alternatives, VQRound minimizes the element-wise worst-case error under $L_\infty$ norm, which is critical for handling heavy-tailed weight distributions in LLMs. Beyond reparameterization, we identify rounding initialization as a decisive factor and develop a lightweight end-to-end finetuning pipeline that optimizes codebooks across all layers using only 128 samples.
Extensive experiments on OPT, LLaMA, LLaMA2, and Qwen3 models demonstrate that VQRound achieves better convergence than traditional adaptive rounding at the same number of steps while using as little as 0.2\% of the trainable parameters. Our results show that adaptive rounding can be made both scalable and fast-fitting. The code is available at \href{https://github.com/zhoustan/VQRound}{https://github.com/zhoustan/VQRound}.
\end{abstract}

\section{Introduction}
\begin{figure}[t] 
  \centering
  \includegraphics[width=0.9\linewidth]{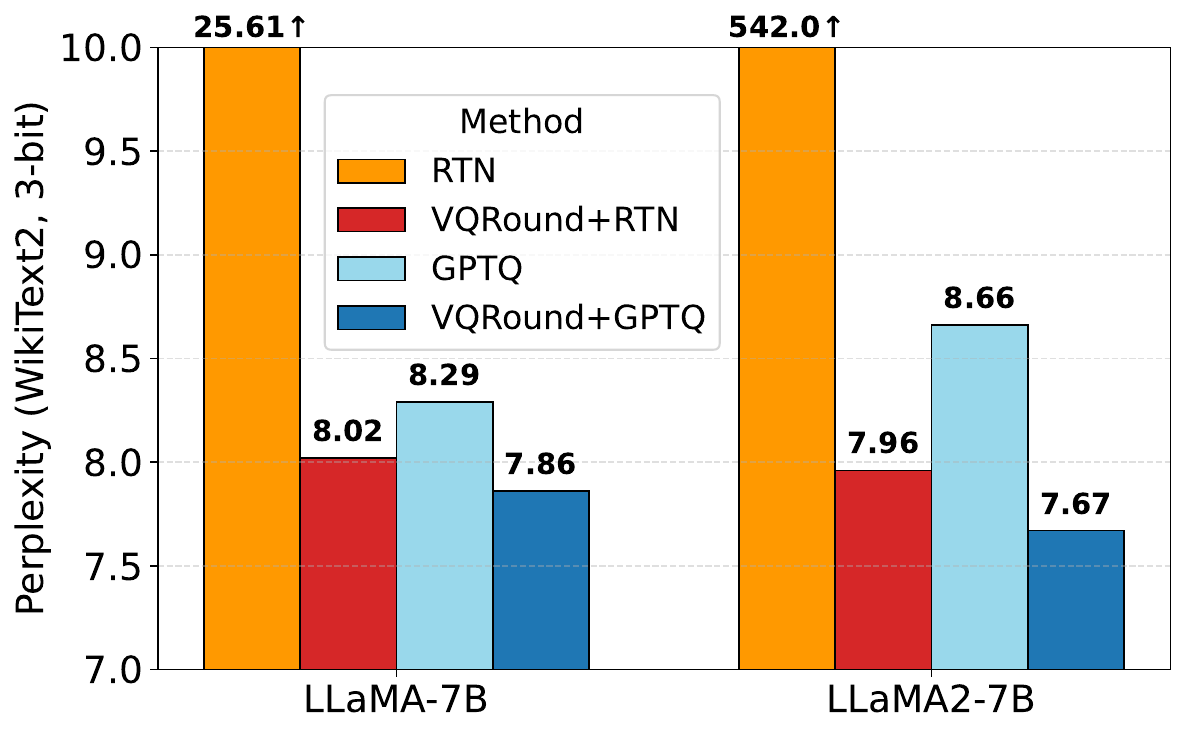} 
  \vspace{-2mm}
  \caption{WikiText-2 perplexity of LLaMA-7B and LLaMA2-7B under 3-bit quantization. VQRound delivers competitive performance and can be further combined with existing methods to reduce quantization error.}
  \captionsetup{justification=raggedright,singlelinecheck=false}
  \label{fig:teaser_1}
  \vspace{-6mm}
\end{figure}

Large language models (LLMs) have pushed the boundaries of natural language processing, but their rapid growth in parameters and context length comes with prohibitive compute and memory costs~\cite{kwon2023efficientmemorymanagementlarge}. Low-bit weight quantization has emerged as a crucial tool to reduce model size and inference latency while preserving task performance, enabling more scalable and efficient deployment of LLMs~\cite{dettmers2022llmint8}. While existing methods~\cite{frantar-gptq, quarot, egiazarian2024extreme, liu2024spinquant, tseng2024quip} predominantly rely on round-to-nearest (RTN) operations. This mathematical intuition-based approach is effective but theoretically suboptimal for minimizing global error. Nagel et al.\yrcite{adaround} point out that the task loss increase induced by quantization can be approximated by a quadratic form $\Delta{\mathbf{w}^\top}\cdot\mathbf{H}(\mathbf{w})\cdot\Delta{\mathbf{w}}$. When $\mathbf{H}$ has non-zero off-diagonal entries, the quantization error of different weights interacts via cross terms $\Delta{w_i}\Delta{w_j}$. This observation motivates adaptive rounding, which introduces learnable rounding parameters and optimizes rounding directions jointly to reduce global reconstruction or task loss. Several prior works apply adaptive rounding schemes~\cite{adaquant, adaround, kim2024aespa, flexround, lee2024lrqoptimizingposttrainingquantization} by introducing a rounding matrix that determines whether to round up or down from the nearest quantization bin for each weight entry.

Despite its theoretical advantages, adaptive rounding has so far seen limited success in large-scale LLM quantization. While effective on smaller networks (\textit{e.g.}, 4-bit CNN quantization in AdaRound~\cite{adaround}, these methods scale poorly to LLMs because the rounding matrix is the same size as the weight matrix. Existing methods introduce an auxiliary rounding matrix whose size matches the weight matrix. This not only inflates memory and computation overhead, but also creates a huge solution space that is difficult to optimize~\cite{OmniQuant,ding2025cbqcrossblockquantizationlarge}. In this work, we improve adaptive rounding from an efficiency perspective by proposing VQRound, a parameter-efficient adaptive rounding framework that rethinks how rounding strategies are represented and optimized. Instead of assigning an independent rounding variable to each weight, VQRound reparameterizes the rounding matrix in a smaller space using vector quantization (VQ). Specifically, rounding variables are represented via a compact codebook, compressing the optimization space from a full-size matrix to a small set of shared vectors. We also discover that VQRound can be potentially incorporated into existing post-training quantization pipelines according to experiments with GPTQ~\cite{frantar-gptq}, QuaRot~\cite{quarot}, and OmniQuant~\cite{OmniQuant}.

Beyond reparameterization, we identify rounding initialization as a critical yet previously underexplored factor in effective adaptive rounding. While prior works on adaptive rounding~\cite{adaround, adaquant, li2021brecq, flexround,lee2024lrqoptimizingposttrainingquantization} primarily focus on rounding objectives and optimization schedules, we show that the initial alignment between rounding parameters and full-precision residual structure plays a decisive role in optimization stability and convergence speed. Poor initialization can lead to unstable training dynamics and slow fitting, particularly in large models with highly coupled rounding variables. In contrast, a well-aligned initialization provides a strong starting point that significantly stabilizes optimization and accelerates convergence, enabling adaptive rounding to scale more effectively.

Previous adaptive rounding methods~\cite{adaround, adaquant, li2021brecq, flexround,lee2024lrqoptimizingposttrainingquantization} typically adopt block-wise (or layer-wise) reconstruction to optimize the rounding matrix. In this work, we further provide an optional end-to-end (E2E) fine-tuning strategy for VQRound to better exploit cross-layer interactions. This global optimization enables rounding errors to be compensated across layers, extending beyond purely local block-wise corrections. 

We develop a configurable optimization pipeline that supports both blockwise and end-to-end optimization, enabling explicit trade-offs between blockwise efficiency and end-to-end accuracy gains.

\begin{figure}[t] 
  \centering
  \includegraphics[width=1.0\linewidth]{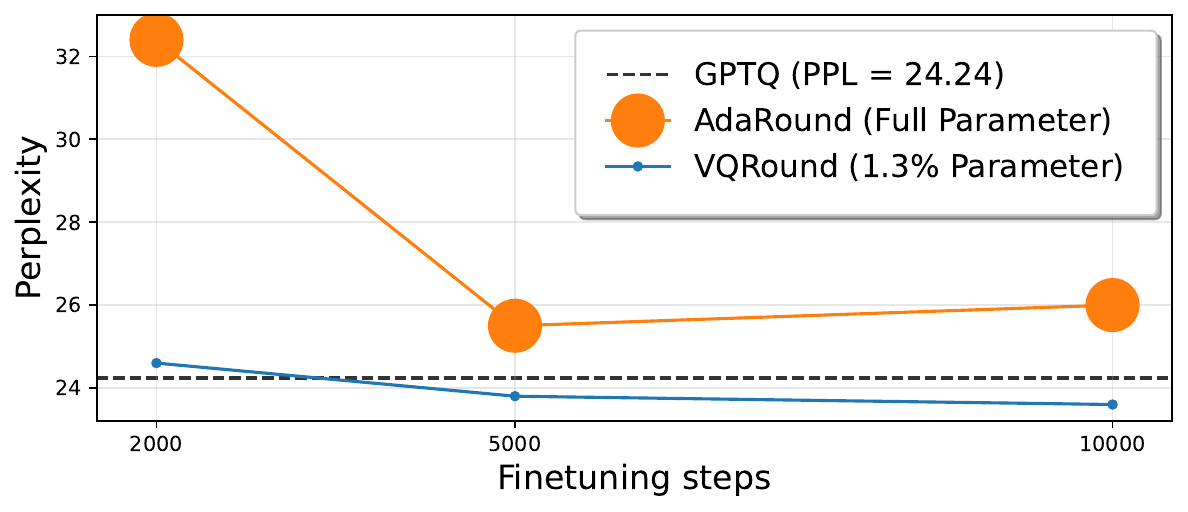} 
  \vspace{-4mm}

  \caption{Convergence comparison between AdaRound and VQRound on OPT-350M under the same end-to-end finetuning steps. The marker size indicates the number of trainable parameters. VQRound converges faster and achieves lower perplexity with significantly fewer trainable parameters.}
  \label{fig:teaser_2}
  \vspace{-8mm}
\end{figure}

In summary, the main contributions of this paper are:
\begin{itemize}[nosep,leftmargin=0.2in]
    \item We propose VQRound, a novel vectorized reparameterization of the adaptive rounding matrix that drastically reduces the number of learnable parameters to approximately 0.2\% of a billion-parameter model, while maintaining high quantization fidelity. 
    
    \item We theoretically analyze the error propagation property during reparameterization. This explains why vectorized reparameterization is superior to other methods (\textit{e.g.}, low-rank decomposition, Kronecker decomposition) and is support by experimental results.
    
    \item We identify rounding initialization as a critical factor in adaptive rounding and show that a well-aligned initialization significantly improves optimization stability and convergence.
    \item We develop an end-to-end fine-tuning pipeline as an alternative to traditional block-wise reconstruction, which jointly optimizes rounding codebooks under a global objective and enables cross-layer error compensation with minimal calibration data. 
\end{itemize}

\section{Related Work}
\subsection{Adaptive Rounding}
Rounding is the basic operation in almost all LLM quantization algorithms (In contrast, vector quantization addresses a sphere packing problem via K-means clustering or nearest-neighbor search~\cite{egiazarian2024extreme,tseng2024quip}. Thus, rounding is not needed.). Current quantization methods typically employ round-to-nearest (RTN) at different granularities, \textit{i.e.}, per-tensor, per-group, per-channel, or even element-wise. Despite its central role, few studies investigate how to optimize rounding itself for efficient and accurate LLM quantization. Recent work shows that stochastic rounding~\cite{croci2022stochastic} and adaptive rounding~\cite{gupta2015deep} can effectively cancel out quantization error, yielding better solutions than RTN.

In particular, adaptive rounding replaces rigid nearest-neighbor rounding with learnable or optimized rounding functions that minimize task-relevant reconstruction loss. Early works such as AdaRound~\cite{adaround} demonstrated that optimized rounding enables 4-bit quantization of CNNs, while subsequent methods like AdaQuant~\cite{adaquant} and BRECQ~\cite{li2021brecq} extended the framework with more flexible formulations, error metrics, and data-aware objectives. However, these do not show that adaptive rounding is ripe for scaling up in LLM quantization. Even though past efforts such as~\cite{flexround} and~\cite{lee2024lrqoptimizingposttrainingquantization} have made attempts to apply adaptive rounding in LLMs, the problems that will arise in scaling up are unavoidable. For example, ~\cite{OmniQuant} mentioned that the rounding matrix is hard to optimize in LLM due to its huge solution space. What's more, the reconstruction is usually block-by-block~\cite{wu2025fimaq}, which would bring potentially high costs on large models.

\subsection{Vector Quantization}
Beyond model quantization, vector quantization has been widely adopted as a discrete representation learning mechanism in generative modeling. VQ-VAE~\cite{vqvae} learns a discrete latent codebook that supports high-fidelity reconstruction and autoregressive priors, while VQ-GAN~\cite{esser2020taming} couples a learned codebook in order to synthesize high-resolution images. In embodied AI, UniAct~\cite{zheng2025universalactionsenhancedembodied} introduces a universal action space and action tokenizer that discretizes continuous robot controls into transferable action tokens. 

VQ has emerged as a powerful alternative to scalar quantization for compressing large models. Unlike scalar methods that treat weights independently, VQ learns a codebook of representative vectors and maps groups of parameters to code indices. Recent methods adapt VQ to LLMs. AQLM~\cite{egiazarian2024extreme} performs input-adaptive multi-codebook quantization. VPTQ~\cite{vptq} introduces a second-order optimization framework. QuIP\#~\cite{tseng2024quip} combines Hessian-aware compression with task-aware reconstruction. While effective at very low bit-widths, these approaches typically require expensive Hessian estimation or multi-codebook clustering, making them computationally and memory-intensive for PTQ. Different from the previous work, we use VQ to reparameterize the rounding matrix, which proves to be both efficient and accurate.

\section{Methodology}

\begin{figure*}[t] 
  \centering
  \includegraphics[width=\linewidth]{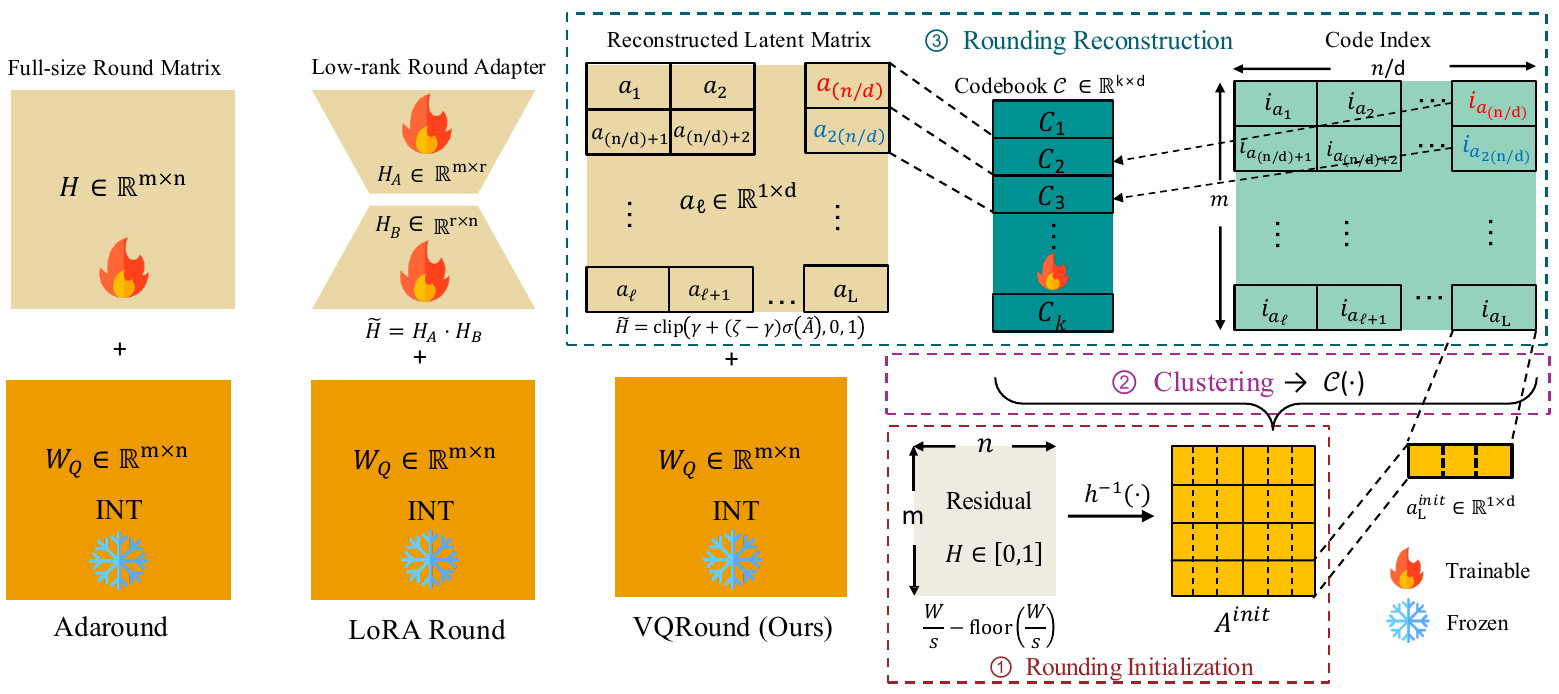} 
  \vspace{-6mm}
  \caption{Comparison of different rounding strategies
  including AdaRound~\cite{adaround}, LoRA Round~\cite{ding2025cbqcrossblockquantizationlarge} and VQRound. Rounding matrix is initialized by $W/s$ and its floored $floor(W/s)$, and $h^{-1}(\cdot)$ is the inverse rectified sigmoid transform. In VQRound, only the codebook $\mathcal{C}\in \mathbb{R}^{k\times d}$ needs to be updated. This is more parameter-efficient than AdaRound and LoRA Round.} 
  \vspace{-4mm}
  \label{fig:vq}
\end{figure*}

In this section, we introduce the details of VQRound, which dramatically improves the efficiency of adaptive rounding. As shown in Fig.~\ref{fig:vq}, the core of the method is the rounding matrix, initialized as the error of the floor operation. A vector codebook is utilized to reparameterize the rounding matrix. In \S\ref{sec:err_prop}, we discuss the rationale for this approach. In \S\ref{sec:vq} we introduce the vectorized reparameterization method and its optimality over other reparameterization methods. Furthermore, we identify rounding initialization as a critical yet underexplored factor in \S\ref{sec:init}. To cope with the suboptimal localized optimization of traditional adaptive rounding methods, we introduce an end-to-end finetuning approach that enables globally optimal quantization in a one-time finetuning stage in \S\ref{sec:blockwise_e2e}.

\textbf{Notation.} We denote the weight matrix by $W$, its quantized form by $W_Q$ (or simply $Q$), the $j$-th column of $W$ by $W_{j}$, and the submatrix consisting of columns from the $j$-th onward (inclusive) by $W_{j:}$. Matrices are denoted by uppercase letters, while scalars are represented by lowercase letters. The Hessian matrix is denoted by $\mathbf{H}$ and the rounding matrix is represented by $H$ throughout this paper. Let $\tilde{H}$ be its approximation (\textit{e.g.}, quantized rounding matrix), and $\Delta H = H - \tilde{H}$ is the error.

\subsection{Preliminaries}
\label{subsec:preliminary}
We follow standard uniform affine weight quantization. For a weight matrix $W\!\in\!\mathbb{R}^{m\times n}$ and $b$-bit integers, the quantized integer tensor $W_Q$ under RTN is
\vspace{-1mm}
\begin{equation}
W_Q = \text{clip}(\text{round}(\frac{W}{s})+z, q_{min}, q_{max}),
\end{equation}
where $s$ is the 
scale, $z$ is the zero-point,
and $q_{\min},q_{\max}$ are the valid integer range. 
RTN independently rounds each entry of $W/s$ to its nearest integer. Though intuitive, this direct conversion may introduce quantization error due to the ignorance of cross-term interactions~\cite{adaround}. To solve this problem, adaptive rounding replaces the rigid RTN decision with a rounding matrix $H\!\in\![0,1]^{m\times n}$ that controls the up/down rounding of entries:
\begin{align}
    W_Q = \mathrm{clip}\!\left(\Big\lfloor \frac{W}{s} \Big\rfloor + H + z,\; q_{\min},\, q_{\max}\right), \label{eq:adaround} \\
    H_{ij} = h(A_{ij}) = \mathrm{clip}\ \!\Big(\gamma+(\zeta-\gamma)\sigma(A_{ij}),\,0,\,1\Big), \label{eq:rounding_matrix}
\end{align}

where the latent rounding matrix $A$ contains the learnable parameter$A_{ij}$, 
$h(\cdot)$ is a rectified sigmoid proposed in~\cite{louizos2017learning},
$\sigma(\cdot)$ is the sigmoid function, $\gamma=-0.1$, and $\zeta=1.1$. During optimization, $H_{ij}$ gradually annealed toward binary values $\{0,1\}$, and $H_{ij}=0$ means rounding-down, 
while $H_{ij}=1$ means rounding-up.
During initialization of the parameters,
$H_{ij}$ is set as the residual between the original weights and their floor values, \textit{i.e.}, $W/s - \lfloor {W}/{s} \rfloor$, which is constrained to the interval $[0,1]$.

At the inference stage, the resulting binary rounding matrix provides deterministic rounding decisions.

The problem of the existing adaptive rounding scheme is that the rounding matrix has the same dimension as the original weights, which results in a large optimization space and cannot be scaled to huge models with billions of parameters. This is the reason why AdaRound could not be applied to LLM quantization~\cite{adaround}. In the following, we introduce a method that shrinks the degree of freedom in $A$ with some parameter reduction method.

\subsection{Analysis of Error Propagation for $h$}
\label{sec:err_prop}

The inverse process of parameter reduction is the construction of the latent rounding matrix $A$, which can be abstracted as a generating function driven by the parameter set $\Theta$, \textit{i.e.}, $\tilde{A} = f(\Theta)$.
Then by applying the nonlinear transformation in Eq.~\ref{eq:rounding_matrix}, the rounding matrix used in quantization is derived.

In this paper, we seek a function $f(\cdot)$ that reduces the number of learnable parameters, \textit{i.e.}, $|\Theta| \ll mn$, while preserving sufficient expressive power in $f(\Theta) \in \mathbb{R}^{m \times n}$ to effectively minimize the approximation error of the rounding matrix, \textit{i.e.},  $\Delta h({A})  = h({A}) - {h}(\tilde{A})$. To achieve this, we analyze how errors induced by parameter reduction propagate to the rounding matrix by investigating the property of the transformation $h$.

\begin{theorem}[Lipschitz contraction of $h$]
\label{thm:lipschitz_contraction}
The transformation function $h: A \mapsto H$ is globally Lipschitz with constant $\mathcal{L} = ({\zeta-\gamma})/{4}$. For any $A,\tilde{A}\in\mathbb{R}^{m\times n}$ and $H_{ij}=h(A_{ij})$, $\tilde H_{ij}=h(\tilde{A}_{ij})$,
\vspace{-1mm}
\begin{equation}
\label{eq:elemwise_lipschitz_main}
|\,\tilde H_{ij}-H_{ij}\,| \le 
\mathcal{L}\,|\,\tilde{A}_{ij}-A_{ij}\,|,
\qquad \forall (i,j).
\end{equation}
\vspace{-1mm}
In particular,
\vspace{-1mm}
\begin{equation}
\label{eq:maxnorm_lipschitz_main}
\|\tilde H-H\|_{\infty} \le \mathcal{L}\,\|\tilde{A}-A\|_{\infty}.
\end{equation}
\end{theorem}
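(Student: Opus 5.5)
The plan is to reduce the statement to a scalar fact and then lift it entrywise. Write $h = c \circ g$, where $g(a) = \gamma + (\zeta-\gamma)\sigma(a)$ is the affine-transformed sigmoid and $c(x) = \mathrm{clip}(x,0,1) = \min\{\max\{x,0\},1\}$. Since $h$ acts elementwise on $A$, it suffices to show that the scalar map $a \mapsto h(a)$ is $\mathcal{L}$-Lipschitz on $\mathbb{R}$. The matrix statements in Eq.~\ref{eq:elemwise_lipschitz_main} and Eq.~\ref{eq:maxnorm_lipschitz_main} then follow immediately.

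First, I will bound the Lipschitz constant of $g$. The sigmoid is smooth with $\sigma'(a) = \sigma(a)(1-\sigma(a))$. Since $t(1-t) \le 1/4$ for $t\in[0,1]$, with equality at $t = 1/2$ (i.e.\ $a=0$), we get $\sup_a |\sigma'(a)| = 1/4$. Because $\zeta - \gamma = 1.2 > 0$, it follows that $|g'(a)| \le (\zeta-\gamma)/4$. By the mean value theorem, for any $a,\tilde a$ there is $\xi$ between them with
\begin{equation*}
|g(\tilde a) - g(a)| = |g'(\xi)|\,|\tilde a - a| \le \tfrac{\zeta-\gamma}{4}\,|\tilde a - a|.
\end{equation*}

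Second, I will show that the clip $c$ is $1$-Lipschitz. It is the metric projection onto the closed convex set $[0,1]$, and such projections are nonexpansive. Alternatively, one can check directly that $\max\{\cdot,0\}$ and $\min\{\cdot,1\}$ are each $1$-Lipschitz, for instance from $|\max(x,0)-\max(y,0)|\le|x-y|$ by cases on the signs. Composing the two maps multiplies the Lipschitz constants, so
\begin{equation*}
|h(\tilde a) - h(a)| \le 1\cdot\mathcal{L}\,|\tilde a - a| .
\end{equation*}
Applying this with $a = A_{ij}$ and $\tilde a = \tilde A_{ij}$ gives Eq.~\ref{eq:elemwise_lipschitz_main}. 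Taking the maximum over $(i,j)$ on both sides, where $\|\cdot\|_\infty$ denotes the entrywise max norm, gives Eq.~\ref{eq:maxnorm_lipschitz_main}.

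None of these steps is a real obstacle. The only point needing care is that the clip makes $h$ non-differentiable at the two points where $g$ crosses $0$ and $1$. This is why I route the argument through the composition of Lipschitz maps rather than bounding $h'$ directly. I would also remark that the constant is attained in the sense of being the sharp global bound for $g$, since $g(0) = 0.5$ lies inside $(0,1)$ and so the clip is inactive near $a=0$, where $|g'|$ reaches its maximum.
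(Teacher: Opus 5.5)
Your proof is correct and follows the paper's argument: bound $g'$ by $(\zeta-\gamma)/4$ using $\sigma(1-\sigma)\le 1/4$, note that $\mathrm{clip}(\cdot,0,1)$ is $1$-Lipschitz so the composition $h$ has the same constant, apply this entrywise, and take the maximum to get the $\|\cdot\|_\infty$ bound. Your additions (the explicit mean value theorem step, nonexpansiveness of the projection onto $[0,1]$, and sharpness at $a=0$ where $g(0)=1/2$ keeps the clip inactive) only spell out details the paper leaves implicit.
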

\begin{corollary}[Tail transfer under Lipschitz contraction]
\label{cor:lipschitz_tail_transfer}
Under the assumptions of Theorem~\ref{thm:lipschitz_contraction}, let
$\Delta{A}_{ij}=\tilde{A}_{ij}-A_{ij}$ and
$\Delta H_{ij}=\tilde H_{ij}-H_{ij}$. Then for any $\varepsilon>0$,
\begin{equation}
\label{eq:lipschitz_tail_transfer}
\mathbb{P}\big(|\Delta H_{ij}|>\varepsilon\big)
\;\le\;
\mathbb{P}\Big(|\Delta{A}_{ij}|>\varepsilon/\mathcal{L}\Big).
\end{equation}
\end{corollary}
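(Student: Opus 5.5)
The corollary follows from a containment of events, using the elementwise bound \eqref{eq:elemwise_lipschitz_main} of Theorem~\ref{thm:lipschitz_contraction}. We treat $A$ and $\tilde A$ (and hence $\Delta A_{ij}$ and $\Delta H_{ij}$) as random variables on a common probability space; the randomness may come from the parameter-reduction map $f(\Theta)$ or from the distribution of weights. The theorem holds deterministically for every pair $(A,\tilde A)$, so it holds pointwise for every outcome $\omega$. That is, $|\Delta H_{ij}(\omega)|\le \mathcal{L}\,|\Delta A_{ij}(\omega)|$ for all $\omega$.

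Fix $\varepsilon>0$ and $(i,j)$. Since $\mathcal{L}=(\zeta-\gamma)/4=0.3>0$, we can divide by $\mathcal{L}$. Suppose $|\Delta H_{ij}(\omega)|>\varepsilon$. Then $\mathcal{L}|\Delta A_{ij}(\omega)|\ge|\Delta H_{ij}(\omega)|>\varepsilon$, so $|\Delta A_{ij}(\omega)|>\varepsilon/\mathcal{L}$. This gives the event inclusion
\[
\{|\Delta H_{ij}|>\varepsilon\}\subseteq\{|\Delta A_{ij}|>\varepsilon/\mathcal{L}\}.
\]
Monotonicity of probability measures then yields \eqref{eq:lipschitz_tail_transfer} immediately.

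The only step that needs care is measurability. $h$ is a composition of the sigmoid, an affine map and a clip, so it is continuous and therefore Borel. Hence $\Delta H_{ij}$ is a measurable function of $(A,\tilde A)$, and both events above are genuine events. If needed, the Lipschitz constant itself can be confirmed directly: $h$ is piecewise differentiable with $|h'|\le(\zeta-\gamma)\sup_x\sigma'(x)=(\zeta-\gamma)/4$, and the clip is $1$-Lipschitz. Strictly, though, this is the content of the theorem and may simply be invoked.

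I expect no real obstacle. The main point is to state explicitly that the deterministic bound transfers pointwise to random variables, and that $\mathcal{L}>0$ so the threshold $\varepsilon/\mathcal{L}$ is well defined. As a remark, taking a union bound over $(i,j)$, or applying the same argument to \eqref{eq:maxnorm_lipschitz_main}, gives the analogous tail bound for $\|\Delta H\|_\infty$.
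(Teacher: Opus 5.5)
Your proof is correct and is essentially the paper's argument: the elementwise Lipschitz bound gives the event inclusion $\{|\Delta H_{ij}|>\varepsilon\}\subseteq\{|\Delta A_{ij}|>\varepsilon/\mathcal{L}\}$, and monotonicity of probability finishes it. Your remarks on measurability and on $\mathcal{L}=0.3>0$ add care that the paper leaves implicit, but they do not change the argument.
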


\begin{remark}
Corollary~\ref{cor:lipschitz_tail_transfer} shows that the nonlinear transformation $h$ in Eq.~\ref{eq:rounding_matrix} cannot amplify small or moderate element-wise errors in $A$; reductions in
the tail probability of $|\Delta{A}_{ij}|$ directly translate to reductions
in the tail probability of $|\Delta H_{ij}|$.
\end{remark}

\begin{theorem}[Clipping threshold via margin-to-boundary]
\label{thm:clipping_threshold}
Let $h$ be defined as in Eq.~\ref{eq:rounding_matrix} and define
$g(x)=\gamma+(\zeta-\gamma)\sigma(x)$. Fix $(i,j)$ and assume
$g(A_{ij})\in(0,1)$.
Define the margin to the nearest clipping boundary as
\vspace{-1mm}
\begin{equation}
\label{eq:margin_def_threshold}
\delta_{ij}
= \min\big\{g(A_{ij}),\,1-g(A_{ij})\big\}.
\end{equation}
If $|\tilde{A}_{ij}-A_{ij}|>\delta_{ij}/\mathcal{L}$ ($\mathcal{L}$ is defined in Theorem~\ref{thm:lipschitz_contraction}), then clipping is activated at entry $(i,j)$, \textit{i.e.},
\begin{equation}
\tilde H_{ij}\in\{0,1\}.
\end{equation}
\end{theorem}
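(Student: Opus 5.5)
The plan is to prove a corrected version of the statement, because the implication as written does not follow from Theorem~\ref{thm:lipschitz_contraction} and is in fact false; the argument below isolates what is true. Theorem~\ref{thm:lipschitz_contraction} gives only an \emph{upper} bound $|g(\tilde A_{ij})-g(A_{ij})|\le \mathcal{L}|\tilde A_{ij}-A_{ij}|$. Clipping at $(i,j)$ requires the \emph{lower} bound $|g(\tilde A_{ij})-g(A_{ij})|\ge\delta_{ij}$, with the displacement of $g$ pointing toward a boundary. A large perturbation of $A_{ij}$ in the wrong direction gives a counterexample. Take $g(A_{ij})=0.05$, so $\delta_{ij}=0.05$. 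Choose $\tilde A_{ij}=0$, which gives $\tilde H_{ij}=g(0)=\tfrac12$. Since $g$ is strictly increasing, $A_{ij}=g^{-1}(0.05)<0$, and so
\begin{equation*}
|\tilde A_{ij}-A_{ij}| = \bigl|g^{-1}(0.05)\bigr| = \log\frac{23}{3}\approx 2.04 > \delta_{ij}/\mathcal{L}=\tfrac{0.2}{0.3}\approx 0.67 .
\end{equation*}
Yet $\tilde H_{ij}=\tfrac12\notin\{0,1\}$. The first step of the write-up is to record this.

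The corrected statement uses two facts. First, $g$ is strictly increasing and continuous. Second, it satisfies $g(x)\to\gamma<0$ as $x\to-\infty$ and $g(x)\to\zeta>1$ as $x\to+\infty$. Hence $x_0=g^{-1}(0)$ and $x_1=g^{-1}(1)$ are well defined, and $h(x)\in\{0,1\}$ exactly when $x\le x_0$ or $x\ge x_1$. The sharp statement is therefore: $\tilde H_{ij}\in\{0,1\}$ iff $\tilde A_{ij}-A_{ij}\le x_0-A_{ij}$ or $\tilde A_{ij}-A_{ij}\ge x_1-A_{ij}$. This is immediate from monotonicity.

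To obtain a margin-type threshold in the spirit of the paper, I would restrict to perturbations directed toward the nearest boundary. Say $\delta_{ij}=g(A_{ij})$, so the relevant perturbation has $\tilde A_{ij}<A_{ij}$. By the mean value theorem on $[x_0,A_{ij}]$,
\begin{equation*}
\delta_{ij}=g(A_{ij})-g(x_0)=g'(\xi)(A_{ij}-x_0)\ge \mathcal{L}\,(A_{ij}-x_0),
\end{equation*}
because $g'(\xi)=(\zeta-\gamma)\sigma(\xi)(1-\sigma(\xi))\le\mathcal{L}$. Rearranging gives $A_{ij}-x_0\le\delta_{ij}/\mathcal{L}$. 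So if $\tilde A_{ij}<A_{ij}$ and $|\tilde A_{ij}-A_{ij}|>\delta_{ij}/\mathcal{L}$, then $\tilde A_{ij}<x_0$, and hence $\tilde H_{ij}=0$. The case $\delta_{ij}=1-g(A_{ij})$ is symmetric: a perturbation toward $+\infty$ of size exceeding $\delta_{ij}/\mathcal{L}$ gives $\tilde H_{ij}=1$.

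Thus the Lipschitz constant does yield the threshold, but only for perturbations in the direction of the nearest boundary. The main obstacle, and the point needing care, is exactly this directional hypothesis. One could drop it only by replacing $\delta_{ij}$ with the farther margin $\max\{g(A_{ij}),1-g(A_{ij})\}$, applying the same mean-value argument to each side. That version holds for perturbations in either direction, since the threshold then exceeds the distances $A_{ij}-x_0$ and $x_1-A_{ij}$ simultaneously.
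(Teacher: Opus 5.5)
\paragraph{Verdict.} Your refutation is correct, but the corrected statement you then propose is also false.

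\paragraph{The refutation.} It identifies exactly what goes wrong in the paper's own proof. That proof writes $|g(\tilde A_{ij})-g(A_{ij})|\le\mathcal{L}|\tilde A_{ij}-A_{ij}|$ and then simply asserts that the displacement ``exceeds the margin.'' An upper bound cannot give that conclusion, so the statement as written is false and the paper's argument is a non sequitur. Your counterexample works, but fix the arithmetic before writing it up:
\begin{itemize}
\item $g(A_{ij})=0.05$ gives $\sigma(A_{ij})=0.15/1.2=1/8$, so $A_{ij}=-\log 7\approx-1.95$, not $-\log\frac{23}{3}$.
\item $\delta_{ij}/\mathcal{L}=0.05/0.3=1/6$, not $0.2/0.3$.
\end{itemize}
Your monotonicity characterization, $\tilde H_{ij}\in\{0,1\}$ iff $\tilde A_{ij}\le x_0$ or $\tilde A_{ij}\ge x_1$, is also correct.

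\paragraph{The gap in the repaired statement.} Your mean-value step reverses the inequality, which is the very error you diagnosed in the paper. From $g'(\xi)\le\mathcal{L}$ you get $\delta_{ij}=g'(\xi)(A_{ij}-x_0)\le\mathcal{L}(A_{ij}-x_0)$, i.e.\ $A_{ij}-x_0\ge\delta_{ij}/\mathcal{L}$. So the distance in $A$-space to the clipping point is \emph{at least} $\delta_{ij}/\mathcal{L}$, not at most. Hence both your directional version and your max-margin version fail.

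A single example defeats the original statement and both of your versions. Take $A_{ij}=0$. Both margins equal $1/2$, so every version uses the threshold $5/3$. But $x_1=-x_0=\log 11\approx 2.40$. Taking $\tilde A_{ij}=\pm 2$ moves toward a boundary by $2>5/3$, yet $\tilde H_{ij}=g(\pm 2)\approx 0.957$ or $0.043$ respectively, so no clipping occurs.

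\paragraph{What is actually true.} The Lipschitz bound gives the converse. If $\tilde H_{ij}\in\{0,1\}$, then $\delta_{ij}\le|g(\tilde A_{ij})-g(A_{ij})|<\mathcal{L}|\Delta A_{ij}|$. The second inequality is strict because $g'<\mathcal{L}$ away from $x=0$ and $\Delta A_{ij}\neq 0$. Therefore
\[
\{\tilde H_{ij}\in\{0,1\}\}\subseteq\{|\Delta A_{ij}|>\delta_{ij}/\mathcal{L}\}.
\]
This is precisely the inclusion needed for the inequality stated in Corollary~\ref{cor:clipping_probability}. The paper's proof of that corollary uses the reverse inclusion, which would give the reverse inequality.

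\paragraph{How to get a sufficient threshold.} You need a \emph{lower} bound on $g'$. For the paper's $\gamma=-0.1$, $\zeta=1.1$, concavity of $\sigma(1-\sigma)$ gives, on $[x_0,x_1]$,
\[
g'\ge g'(x_0)=g'(x_1)=\frac{(1-\gamma)(\zeta-1)}{\zeta-\gamma}=\frac{11}{120}=:m.
\]
With $m$ in place of $\mathcal{L}$, your mean-value argument goes through. Moving toward the nearest boundary by more than $\delta_{ij}/m$ forces clipping. Moving in either direction by more than $\max\{g(A_{ij}),1-g(A_{ij})\}/m$ forces clipping as well.
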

\vspace{-1mm}
\begin{corollary}[Probability of clipping-induced saturation]
\label{cor:clipping_probability}
Under the assumptions of Theorem~\ref{thm:clipping_threshold},
\vspace{-1mm}
\begin{equation}
\label{eq:clipping_probability_bound}
\mathbb{P}\big(\tilde H_{ij}\in\{0,1\}\big) \le
\mathbb{P}\Big(|\Delta{A}_{ij}|>\delta_{ij}/\mathcal{L}\Big).
\end{equation}
\end{corollary}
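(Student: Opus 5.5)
The plan is to prove the inequality through an inclusion of events. I will show that, pointwise on the underlying probability space, saturation can only happen when the perturbation of the latent entry is large:
\begin{equation*}
\{\tilde H_{ij}\in\{0,1\}\}\subseteq\{|\Delta A_{ij}|>\delta_{ij}/\mathcal{L}\}.
\end{equation*}
Monotonicity of $\mathbb{P}$ then gives Eq.~\ref{eq:clipping_probability_bound} at once. The randomness sits in $\tilde A_{ij}$, while $A_{ij}$ and $\delta_{ij}$ are fixed. I will not invoke Theorem~\ref{thm:clipping_threshold} directly, because its implication runs from a large $|\Delta A_{ij}|$ to saturation. That direction would give the reverse inclusion. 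Here I need the converse, and I will derive it from the Lipschitz property of the unclipped map $g$ that underlies Theorem~\ref{thm:lipschitz_contraction}.

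\textbf{Step 1 (saturation forces a large jump in $g$).} By Eq.~\ref{eq:rounding_matrix}, $\tilde H_{ij}=\mathrm{clip}(g(\tilde A_{ij}),0,1)$. Since $g$ is continuous and strictly increasing, $\tilde H_{ij}\in\{0,1\}$ means either $g(\tilde A_{ij})\le 0$ or $g(\tilde A_{ij})\ge 1$. By assumption $g(A_{ij})\in(0,1)$. In the first case $|g(\tilde A_{ij})-g(A_{ij})|\ge g(A_{ij})\ge\delta_{ij}$. In the second case $|g(\tilde A_{ij})-g(A_{ij})|\ge 1-g(A_{ij})\ge\delta_{ij}$. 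Both cases use the definition of $\delta_{ij}$ in Eq.~\ref{eq:margin_def_threshold}.

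\textbf{Step 2 (transfer to $\Delta A_{ij}$ with a strict inequality).} The derivative is $g'(x)=(\zeta-\gamma)\sigma(x)(1-\sigma(x))$. It is at most $\mathcal{L}=(\zeta-\gamma)/4$, with equality only at $x=0$. Step 1 forces $\Delta A_{ij}\neq 0$. Writing $g(\tilde A_{ij})-g(A_{ij})=\int_{A_{ij}}^{\tilde A_{ij}} g'(t)\,dt$ over a nondegenerate interval, the integrand is strictly below $\mathcal{L}$ except at one point. This gives $|g(\tilde A_{ij})-g(A_{ij})|<\mathcal{L}|\Delta A_{ij}|$. Combined with Step 1, $\mathcal{L}|\Delta A_{ij}|>\delta_{ij}$, which is the desired inclusion with the strict ``$>$'' of the statement.

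\textbf{Step 3 (conclude).} The event $\{\tilde H_{ij}\in\{0,1\}\}$ is contained in $\{|\Delta A_{ij}|>\delta_{ij}/\mathcal{L}\}$, so its probability is no larger. The only delicate point is Step 2: obtaining strictness rather than just ``$\ge$''. It requires the unclipped $g$ rather than the clipped $h$, since $h$ is flat beyond the boundaries. It also requires the observation that $g'$ attains $\mathcal{L}$ only at a single point. If one accepts the non-strict version, plain Lipschitz continuity of $g$ suffices. The boundary event $\{|\Delta A_{ij}|=\delta_{ij}/\mathcal{L}\}$ would then be the only discrepancy.
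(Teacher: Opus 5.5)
Your proof is correct. It deliberately departs from the paper's route, and rightly so. The paper's proof asserts that Theorem~\ref{thm:clipping_threshold} gives the inclusion $\{|\Delta A_{ij}|>\delta_{ij}/\mathcal{L}\}\subseteq\{\tilde H_{ij}\in\{0,1\}\}$ and then ``takes probabilities.'' As you observed, that inclusion can only yield the reverse inequality $\mathbb{P}(|\Delta A_{ij}|>\delta_{ij}/\mathcal{L})\le\mathbb{P}(\tilde H_{ij}\in\{0,1\})$.

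Theorem~\ref{thm:clipping_threshold} itself also cannot be derived from a Lipschitz bound. An upper bound on the displacement of $g$ never forces that displacement past the margin. In fact the theorem is false. Take $A_{ij}=0$: then $g(0)=0.5$, $\delta_{ij}=0.5$ and $\delta_{ij}/\mathcal{L}=5/3$. Yet $\tilde A_{ij}=2$ gives $g(2)\approx 0.957\in(0,1)$, so no clipping occurs.

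Your argument proves the converse inclusion $\{\tilde H_{ij}\in\{0,1\}\}\subseteq\{|\Delta A_{ij}|>\delta_{ij}/\mathcal{L}\}$ directly from the Lipschitz property of the unclipped $g$. That is exactly what the stated inequality requires, and it needs nothing from Theorem~\ref{thm:clipping_threshold}.

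The strictness refinement in your Step~2 is also correct. You integrate the continuous, nonnegative function $\mathcal{L}-g'$, which vanishes only at $t=0$, over a nondegenerate interval. This is what matches the strict ``$>$'' in the event; plain Lipschitz continuity would only give ``$\ge$''.

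A minor point: in Step~1 the dichotomy $g(\tilde A_{ij})\le 0$ or $g(\tilde A_{ij})\ge 1$ follows from the definition of $\mathrm{clip}$ alone. Monotonicity of $g$ is not needed there.
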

\vspace{-1mm}
\begin{remark}
Corollary~\ref{cor:clipping_probability} isolates a nonlinear effect absent from
pure Lipschitz analysis: large element-wise errors in $A$ can induce
clipping and hence produce order-one errors in $H$. As a result, approximation
methods that suppress the tails of $|\Delta{A}_{ij}|$ reduce the frequency of
clipping-induced saturation errors.
\end{remark}
\vspace{-2mm}
All proofs are given in Appendix~\ref{sec:proof}.

\begin{figure*}[t] 
  \centering
  \includegraphics[width=0.95\linewidth]{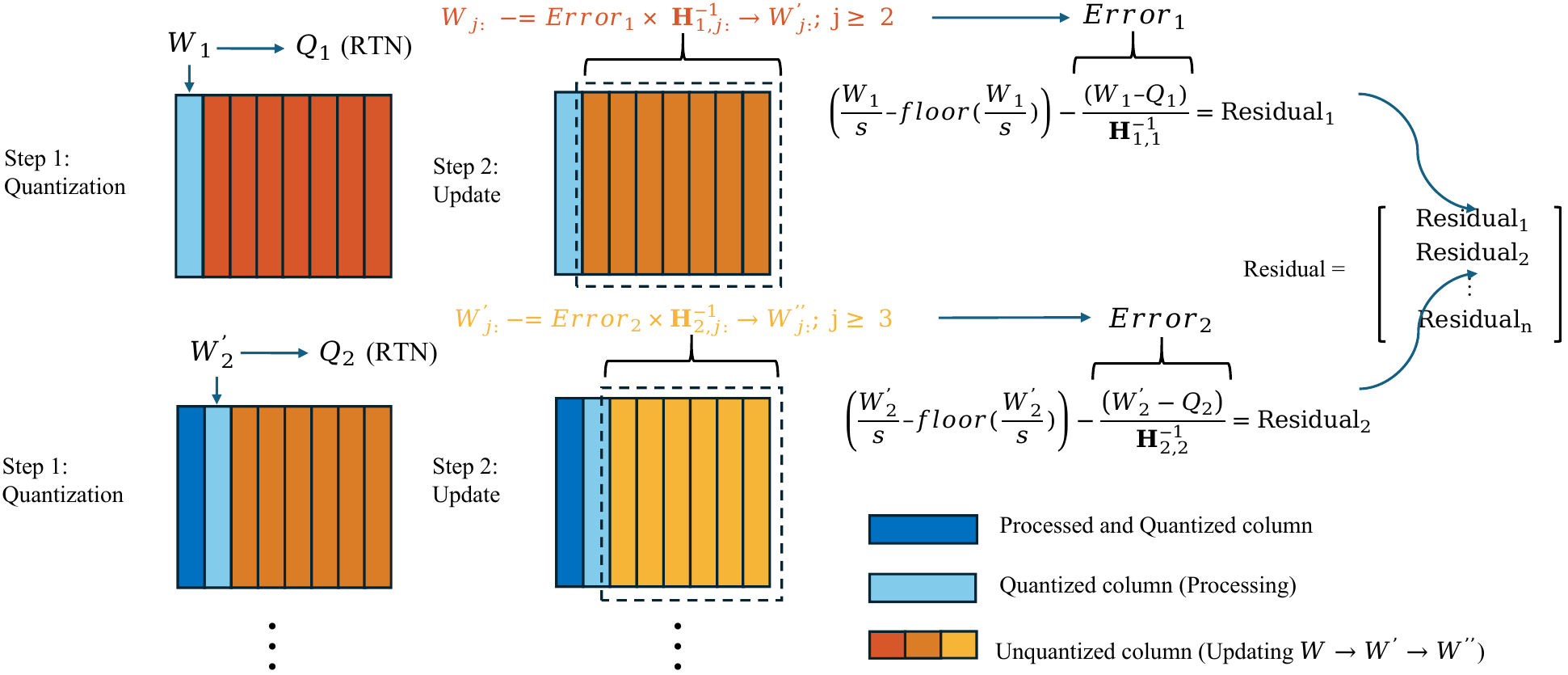} 
  \caption{Initialization of rounding matrix based on residual before inverse rectified sigmoid.}

  \vspace{-2mm}
  \label{fig:initialization}
\end{figure*}

\subsection{Vectorized Reparameterization}
\label{sec:vq}

Several methods can be employed for parameter reduction in adaptive rounding. For example, CBQ~\cite{ding2025cbqcrossblockquantizationlarge} adopts singular value decomposition (SVD), and Kronecker product–based decompositions can be used for a similar purpose. However, in our exploration, we find that these approaches are not well-suited for suppressing the tails of the element-wise error distribution  $|\Delta A_{ij}|$. SVD and Kronecker-based approximations impose global low-dimensional structure on $A$ and optimize global reconstruction objectives (\textit{e.g.}, Frobenius norm). While effective in reducing average error, these global approximations do not explicitly control element-wise deviations and may produce localized residual spikes. Such spikes manifest as heavier element-wise tails in $|\Delta A_{ij}|$, increasing the probability that the clipping threshold
$\delta_{ij}/\mathcal{L}$ is crossed. Each threshold crossing triggers saturation in the rounding function, leading to order-one errors in $H$ and a broader error distribution. More analysis is given in Appendix~\ref{sec:svd_kronecker}~\&~\ref{sec:figure}.

To explicitly minimize the worst-case error, we propose to reparameterize $A$ using vector quantization (VQ). We divide $A$ into $L$ vectors of length $d$,
$\{a_\ell\in\mathbb{R}^{d}\}_{\ell=1}^{L}$, and learn a codebook
$\mathcal{C}=\{c_1,\dots,c_{k}\}\subset\mathbb{R}^{d}$ with $k\ll L$.
Each vector is assigned to its nearest centroid:
\begin{equation}
i_\ell=\arg\min_{j\in \{1,\dots,k\}}\;\|a_\ell-c_j\|_2^2,\qquad
a^{\mathrm{VQ}}_\ell=c_{i_\ell},
\end{equation}
yielding the VQ-based reconstruction of the rounding matrix
$A_{\mathrm{VQ}}=\mathrm{reshape}\big([\,c_{i_1},\dots,c_{i_L}\,]\big)$.
In forward passes, the substitution is a table lookup; only the codebook vectors
$\{c_1, \cdots, c_k\}$ are trainable.  
This reduces the number of learned parameters from $\mathcal{O}(mn)$ to $\mathcal{O}(kd)$, while preserving local flexibility within each block.

Vector quantization produces locally bounded reconstruction errors by assigning each block of $A$ to a nearby codeword. As a result, element-wise errors are constrained by the local cell geometry and large deviations are suppressed. In probabilistic terms, vector quantization yields a rapidly decaying tail and rarely produces errors large enough to exceed the clipping threshold $\delta_{ij}/\mathcal{L}$. Consequently, clipping events are infrequent, and the induced element-wise errors in $H$
remain tightly concentrated around zero.

\subsection{Rounding Initialization}
\label{sec:init}
We find that the initialization of the rounding matrix is decisive for optimization: with identical training schedules, different initializations can lead to remarkably different final results. Since rounding parameters determine whether each weight rounds up or down, a poor starting point can induce unstable optimization, slow convergence, and suboptimal rounding patterns that accumulate large quantization error. An effective initialization should (i) reduce inference error from the outset and (ii) explicitly control the worst-case element-wise error, \textit{i.e.}, $\|\tilde H-H\|_{\infty}$, so that each entry is already oriented toward a correct descent direction.

\textbf{Hessian-aware residual initialization.} 

To construct a more informed initialization, we propose a Hessian-aware residual scheme. We describe the procedure for channel-wise quantization with sequential error compensation, as illustrated in Fig.~\ref{fig:initialization}. Followed by GPTQ~\cite{frantar-gptq}, we process the weight column by column. At step $j$, we first quantize the $j$-{th} column using RTN to obtain $Q_j$,  and the quantization error $E_j$ is defined as:
\vspace{-1mm}
\begin{equation}
    E_j = (W_j - Q_j)/\mathbf{H}_{j,j}^{-1},
\end{equation}
which normalizes the local quantization error by the curvature along column $j$. Instead of discarding this residual, we propagate it to the remaining unprocessed columns using Hessian coupling terms:
\vspace{-1mm}
\begin{equation}
    W_{j:} \leftarrow W_{j:} - E_j \cdot \mathbf{H}_{j,j:}^{-1},
\end{equation}
This sequential update accumulates second-order compensation effects from previously processed columns. Finally, we use $E_j$ to initialize the approximate rounding variable of column $j$ as:
\vspace{-2mm}
\begin{equation}
    \tilde{H_j} =  \frac{W_j}{s} - \Big\lfloor \frac{W_j}{s} \Big\rfloor - E_j.
\end{equation}
\vspace{-1mm}
The full procedure is provided in Appendix Alg.~\ref{alg:hessian_init}.

\subsection{Blockwise or End-to-End Optimization}
\label{sec:blockwise_e2e}

While vectorized reparameterization provides a compact initialization of the rounding matrix $H$, the rounding parameters can be further optimized after initialization. In practice, VQRound supports both blockwise reconstruction and end-to-end finetuning. Either strategy can be adopted depending on computational constraints and optimization objectives. Blockwise reconstruction optimizes rounding decisions locally within each network block~\cite{li2021brecq}, which reduces memory overhead and simplifies optimization. In contrast, end-to-end (E2E) finetuning jointly updates the codebook parameters across all layers under a global objective. This global optimization allows cross-layer error compensation and may improve reconstruction fidelity when the computing resource is available. The E2E finetuning procedure is summarized in Alg.~\ref{alg:finetune}. For both methods, we optimize a loss consisting of a KL-divergence~\cite{kldiv} and a regularization term $\mathcal{R}(H)$ that encourages the relaxed rounding variables to approach hard $\{0,1\}$ decisions, \textit{i.e.},

\begin{equation}
    \mathcal{R}(H)=\sum_{i,j}[1-|2H_{ij}-1|^\beta].
\end{equation}
The parameter $\beta$ is annealed during training to gradually sharpen the rounding matrix. A large $\beta$ may slow convergence of the rounding objective, whereas a small $\beta$ may produce overly sharp rounding decisions early in training.

\section{Experiments}

We evaluate VQRound on a diverse set of language model families, including OPT~\citep{optmodel}, LLaMA~\citep{llama}, LLaMA2~\citep{touvron2023llama2openfoundation}, which are widely adopted in both large language model applications and quantization research~\citep{frantar-gptq, egiazarian2024extreme}. To further examine its robustness, we additionally report results on the recent Qwen3 model~\citep{yang2025qwen3technicalreport}. We also validate the compatibility of VQRound by integrating it with existing quantization frameworks such as GPTQ~\citep{frantar-gptq}, QuaRot~\citep{quarot}, and OmniQuant~\cite{OmniQuant}. Detailed experimental settings are provided in Appendix~\ref{appendix:exp_setting}.

\subsection{Results}
We evaluate the models using perplexity on WikiText2~\citep{wikitext2} and C4~\citep{c4dataset}. In addition, we report zero-shot accuracy on WinoGrande~\citep{windograde}, PiQA~\citep{piqa}, HellaSwag~\citep{hellaswag}, and ARC-Easy/ARC-Challenge~\citep{arcchallange}. Additional experimental results are provided in Appendix~\ref{appendix:results}.

\begin{table}[h]
\centering
\caption{OPT perplexity on Wikitext2. Lower is better. *: We integrate VQRound with OmniQuant~\cite{OmniQuant}, following the design of block-wise reconstruction. All the results from OmniQuant were obtained through our retraining process, and the initialization of VQ was also established during this training process.}
\label{tab:opt-wiki}
\vspace{-3mm}
\setlength{\tabcolsep}{1.5pt}
\scalebox{0.69}{
    \begin{tabular}{cc cccccc}
    \toprule
    \multirow{2}{*}{\textbf{Precision~~}} & \multirow{2}{*}{\textbf{~~Method~~}}
    & \multicolumn{6}{c}{\textbf{OPT Model Size}} \\
    \cmidrule(lr){3-8}
    & & \textbf{~~~125M} & \textbf{~~~350M} & \textbf{~~1.3B~} & \textbf{~~2.7B} & \textbf{~~6.7B} & \textbf{~~13B~} \\
    \midrule
    \multirow{1}{*}{FP16}
    & Baseline
    & 27.65 & 22.00 & 14.63 & 12.47 & 10.86 & 10.13 \\
    \specialrule{1pt}{3pt}{3pt}
    \multirow{6}{*}{\cellcolor{white}4 bits}
    & RTN                   & 37.29 & 25.94 & 48.17 & 16.92 & 12.10 & 11.32 \\
    & \cellcolor{gray!30} VQRound+RTN & \cellcolor{gray!30}30.69 & \cellcolor{gray!30}23.77 & \cellcolor{gray!30}15.48 & \cellcolor{gray!30}13.30 & \cellcolor{gray!30}11.26 & \cellcolor{gray!30}10.66 \\
    & GPTQ                  & 31.12 & 24.24 & 15.47 & 12.87 & 11.39 & \textbf{10.31} \\
    & \cellcolor{gray!30} VQRound+{GPTQ}  & \cellcolor{gray!30}\best{30.39} & \cellcolor{gray!30}\best{23.02} & \cellcolor{gray!30}\best{15.38} & \cellcolor{gray!30}\best{12.77} & \cellcolor{gray!30}\best{11.13} & \cellcolor{gray!30}10.37 \\ 
    \cmidrule(lr){2-8}
    & OmniQuant$^{*}$ & 30.40 & - & 15.61 & 13.34 & 11.58 & 10.89 \\
    & \cellcolor{gray!30} VQRound+{OmniQuant}$^{*}$  & \cellcolor{gray!30}\best{28.98} & \cellcolor{gray!30}\best{-} & \cellcolor{gray!30}\best{15.19} & \cellcolor{gray!30}\best{12.86} & \cellcolor{gray!30}\best{11.12} & \cellcolor{gray!30}\best{10.44} \\
    
    \specialrule{1pt}{3pt}{3pt}
    
    \multirow{6}{*}{3 bits}
    & RTN                              & 1.3e3 & 64.57 & 1.3e4 & 1.6e4 & 5.8e3 & 3.4e3 \\
    &  \cellcolor{gray!30} VQRound+RTN  & \cellcolor{gray!30}47.02 & \cellcolor{gray!30}33.63 & \cellcolor{gray!30}22.67 & \cellcolor{gray!30}18.57 & \cellcolor{gray!30}13.72 & \cellcolor{gray!30}12.28 \\
    & GPTQ                             & 53.85 & 33.79 & 20.97 & 16.88 & 14.86 & 11.61 \\
    &  \cellcolor{gray!30} VQRound+GPTQ & \cellcolor{gray!30}\best{46.10} & \cellcolor{gray!30}\best{28.03} & \cellcolor{gray!30}\best{19.13} & \cellcolor{gray!30}\best{15.55} & \cellcolor{gray!30}\best{12.45} & \cellcolor{gray!30}\best{11.37} \\
    \cmidrule(lr){2-8}
    & OmniQuant$^{*}$ & 41.61 & - & 19.03 & 16.69 & 13.32 & 13.59 \\
    & \cellcolor{gray!30} VQRound+{OmniQuant}$^{*}$  & \cellcolor{gray!30}\best{32.92} & \cellcolor{gray!30}\best{-} & \cellcolor{gray!30}\best{19.00} & \cellcolor{gray!30}\best{13.99} & \cellcolor{gray!30}\best{11.93} & \cellcolor{gray!30}\best{11.23} \\

    \bottomrule
    \end{tabular}%
}
\end{table}

\begin{table}[h]
\centering
\small
\caption{Applying VQRound on QuaRot~\citep{quarot} improves Wikitext2 PPL under W4A16 asymmetric quantization.}
\label{tab:quarot-vq}
\setlength{\tabcolsep}{6pt}
\renewcommand{\arraystretch}{1.1}

\begin{tabular}{lcc}
\toprule
\textbf{Method} & \textbf{LLaMA 7B} & \textbf{LLaMA2 7B} \\
\midrule
FP16 & 5.68 & 5.47 \\
\midrule
RTN & 7.94 & 6.99 \\
QuaRot+RTN & 7.46 & 6.76 \\
QuaRot+VQRound & \textbf{5.98} & \textbf{5.84} \\
\bottomrule
\end{tabular}
\vspace{-2mm}
\end{table}

\begin{table*}[!t]
    \centering
    \caption{LLaMA family perplexity on Wikitext2 and C4. Lower is better.}
    \label{tab:llama-family-wiki-c4}
    \renewcommand{\arraystretch}{1.2} 
    \resizebox{0.8\linewidth}{!}{%
    \begin{tabular}{cc cccc cccc}
    \toprule
    \multirow{2}{*}{\textbf{Precision}} & \multirow{2}{*}{\textbf{Method}}
      & \multicolumn{2}{c}{\textbf{LLaMA-7B}} 
      & \multicolumn{2}{c}{\textbf{LLaMA-13B}} 
      & \multicolumn{2}{c}{\textbf{LLaMA2-7B}} 
      & \multicolumn{2}{c}{\textbf{LLaMA2-13B}} \\
    \cmidrule(lr){3-4} \cmidrule(lr){5-6} \cmidrule(lr){7-8} \cmidrule(lr){9-10}
    & & \textbf{WikiText2} & \textbf{C4} 
      & \textbf{WikiText2} & \textbf{C4} 
      & \textbf{WikiText2} & \textbf{C4} 
      & \textbf{WikiText2} & \textbf{C4} \\
    \midrule
    FP16 & Baseline & 5.68 & 7.34 & 5.09 & 6.80 & 5.47 & 7.26 & 4.88 & 6.73 \\
    \specialrule{1pt}{3pt}{3pt}
    \multirow{4}{*}{\centering 4 bits}
      & RTN & 6.29 & 8.12 & 5.53 & 7.23 & 6.12 & 8.17 & 5.21 & 7.14 \\
      & \cellcolor{gray!30} VQRound+RTN  & \cellcolor{gray!30}6.13 & \cellcolor{gray!30}7.88 & \cellcolor{gray!30}5.42 & \cellcolor{gray!30}7.17 & \cellcolor{gray!30}5.90 & \cellcolor{gray!30}7.88 & \cellcolor{gray!30}5.19 & \cellcolor{gray!30}7.13 \\
      & GPTQ & 6.17 & 7.80 & \textbf{5.37} & 7.28 & 6.06 & 7.84 & \textbf{5.16} & \textbf{7.03} \\
      & \cellcolor{gray!30} VQRound+GPTQ & \cellcolor{gray!30}\textbf{6.08} & \cellcolor{gray!30}\textbf{7.78} & \cellcolor{gray!30}5.40 & \cellcolor{gray!30}\textbf{7.10} & \cellcolor{gray!30}\textbf{5.85} & \cellcolor{gray!30}\textbf{7.79} & \cellcolor{gray!30}5.18 & \cellcolor{gray!30}7.06 \\
    \specialrule{1pt}{3pt}{3pt}
    \multirow{4}{*}{3 bits}
      & RTN & 25.61 & 30.86 & 11.78 & 14.46 & 542.0 & 527.2 & 10.69 & 13.87 \\
      & \cellcolor{gray!30} VQRound+RTN & \cellcolor{gray!30}8.02 & \cellcolor{gray!30}10.29 & \cellcolor{gray!30}6.71 & \cellcolor{gray!30}8.88 & \cellcolor{gray!30}7.96 & \cellcolor{gray!30}10.54 & \cellcolor{gray!30}6.58 & \cellcolor{gray!30}8.94 \\
      & GPTQ       & 8.29 & 10.51 & 6.73 & 8.83 & 8.66 & 11.24 & 6.55 & 8.76 \\
      & \cellcolor{gray!30}VQRound+GPTQ & \cellcolor{gray!30}\textbf{7.86} & \cellcolor{gray!30}\textbf{9.95} & \cellcolor{gray!30}\textbf{6.46} & \cellcolor{gray!30}\textbf{8.47} & \cellcolor{gray!30}\textbf{7.67} & \cellcolor{gray!30}\textbf{10.05} & \cellcolor{gray!30}\textbf{6.33} & \cellcolor{gray!30}\textbf{8.57}   \\
    \specialrule{1pt}{3pt}{3pt}
    \multirow{4}{*}{2 bits}
      & RTN & 1.1e5 & 1.1e5 & 5.7e4 & 5.9e4 & 1.8e4 & 5.1e4 & 2.8e4 & 5.3e4 \\
      & \cellcolor{gray!30} VQRound+RTN & \cellcolor{gray!30} 65.41 & \cellcolor{gray!30}43.52 & \cellcolor{gray!30}47.57 & \cellcolor{gray!30}31.53 & \cellcolor{gray!30}84.07 & \cellcolor{gray!30}56.67 & \cellcolor{gray!30}68.27 & \cellcolor{gray!30}38.54 \\
      & GPTQ & 1.0e4 & 872.7 & 3.7e3 & 809.7 & 7.5e3 & 1.7e3 & 2.1e3 & 560.7 \\
      & \cellcolor{gray!30}VQRound+GPTQ & \cellcolor{gray!30}\textbf{64.82} & \cellcolor{gray!30}\textbf{37.49} & \cellcolor{gray!30}\textbf{34.62} & \cellcolor{gray!30}\textbf{25.20} & \cellcolor{gray!30}\textbf{73.08} & \cellcolor{gray!30}\textbf{45.13} & \cellcolor{gray!30}\textbf{48.29} & \cellcolor{gray!30}\textbf{29.68} \\
    \bottomrule
    \end{tabular}}

\end{table*}

\begin{table*}[!t]
  \centering
  \caption{Qwen3 perplexity on Wikitext2 and C4. Lower is better.}
  \label{tab:qwen-wiki-c4}
  \footnotesize
  \renewcommand{\arraystretch}{1.12}
  \setlength{\tabcolsep}{6pt}

\resizebox{0.8\linewidth}{!}{%
\begin{tabular}{cc*{8}{c}}
\toprule
\multirow{2}{*}{\textbf{Precision}} & \multirow{2}{*}{\textbf{Method}}
& \multicolumn{2}{c}{\textbf{0.6B}} & \multicolumn{2}{c}{\textbf{1.7B}}
& \multicolumn{2}{c}{\textbf{4B}}   & \multicolumn{2}{c}{\textbf{8B}} \\
\cmidrule(lr){3-4}\cmidrule(lr){5-6}\cmidrule(lr){7-8}\cmidrule(lr){9-10}
& & \textbf{WikiText2} & \textbf{C4} & \textbf{WikiText2} & \textbf{C4}
  & \textbf{WikiText2} & \textbf{C4} & \textbf{WikiText2} & \textbf{C4} \\
\midrule

\multirow{1}{*}{FP16}
& Baseline & 20.96 & 30.31 & 16.67 & 22.36 & 13.64 & 19.83 & 9.72 & 15.42 \\
\specialrule{1pt}{3pt}{3pt}

\multirow{4}{*}{\cellcolor{white}4 bits}
& RTN              & 37.39 & 51.69 & 28.26 & 32.45 & 17.47 & 24.57 & 12.01 & 18.48 \\
& \cellcolor{gray!30}VQRound+RTN  & \cellcolor{gray!30}25.55 & \cellcolor{gray!30}35.30 & \cellcolor{gray!30}\best{16.97} & \cellcolor{gray!30}25.08 & \cellcolor{gray!30}\best{13.57} & \cellcolor{gray!30}21.59 & \cellcolor{gray!30}10.33 & \cellcolor{gray!30}16.76 \\
& GPTQ             & 30.05 & 42.71 & 25.60 & 30.68 & 14.82 & \best{20.88} & 10.59 & 16.43 \\
& \cellcolor{gray!30}VQRound+GPTQ & \cellcolor{gray!30}\best{24.72} & \cellcolor{gray!30}\best{34.28} & \cellcolor{gray!30}17.00 & \cellcolor{gray!30}\best{24.15}
                    & \cellcolor{gray!30}13.73 & \cellcolor{gray!30}20.93 & \cellcolor{gray!30}\textbf{10.18} & \cellcolor{gray!30}\textbf{16.28} \\

\bottomrule
\end{tabular}
}
\vspace{-2mm}
\end{table*}

Tab.~\ref{tab:opt-wiki} reports perplexity results on WikiText2 across OPT model sizes and quantization settings. As a standalone rounding strategy, VQRound achieves performance comparable to existing SOTA approaches such as GPTQ and OmniQuant under both 4-bit and 3-bit quantization. In several 4-bit settings, particularly for smaller and mid-sized models, VQRound slightly improves over GPTQ while remaining competitive with OmniQuant.
In addition, VQRound has the potential to be integrated with existing post-training quantization methods. When combined with GPTQ or OmniQuant, it further reduces perplexity in most cases, with more noticeable gains in lower-bit settings. These results suggest that VQRound provides complementary error reduction on top of existing optimization procedures.
More results on the C4 (Tab.~\ref{tab:opt-c4}) exhibit the same characteristics as WikiText2. Both results show that VQRound has good compensation of RTN, GPTQ, and OmniQuant. Experiments on QuaRot (Tab.~\ref{tab:quarot-vq}) further show that VQRound can be combined with rotation-based quantization methods and provides additional performance gains.

We further evaluate VQRound on the LLaMA and LLaMA2 families (Tab.~\ref{tab:llama-family-wiki-c4}). At 4-bit precision, it achieves results on par with GPTQ, while at 3-bit VQRound+GPTQ consistently outperforms GPTQ (\textit{e.g.}, reducing perplexity by 0.43 on LLaMA-7B and 0.99 on LLaMA2-7B). Under the extreme 2-bit setting where GPTQ collapses, VQRound remains stable with perplexity below 100, demonstrating its robustness and potential for ultra-low-bit quantization.

As shown in Tab.~\ref{tab:qwen-wiki-c4}, VQRound demonstrates clear advantages on the Qwen3 family. It consistently mitigates the degradation of RTN and complements GPTQ, yielding stable improvements across model scales, with only negligible gaps in rare cases (\textit{e.g.}, C4 on Qwen3-4B). This confirms that VQRound generalizes effectively to modern LLMs.

We report zero-shot evaluation on five commonsense reasoning benchmarks in Tab.~\ref{tab:3bit-llama-commonsense} and Tab.~\ref{tab:4bit-llama-commonsense}. 
At 3-bit precision, VQRound+GPTQ achieves the highest accuracy in the majority of cases, especially on LLaMA2 models, where it consistently surpasses GPTQ. At 4-bit precision, our method shows comparable results with GPTQ. Notably, the advantage of our method becomes more evident under lower-bit settings, where it consistently reduces the accuracy gap to FP16 compared with GPTQ.

Furthermore, our comparative experiments on the OPT-350M model (in Fig.~\ref{fig:teaser_2}) demonstrate that VQRound exhibits faster convergence and lower perplexity than AdaRound using the same E2E fine-tuning steps.

\begin{table*}[t]
  \centering
  \footnotesize
  \caption{3 bit zero-shot accuracy (\%) on commonsense benchmarks. Higher is better.}
  \label{tab:3bit-llama-commonsense}
  \footnotesize
  \setlength{\tabcolsep}{6pt}
  \renewcommand{\arraystretch}{1.15}

\resizebox{0.8\linewidth}{!}{%
\begin{tabular}{cc*{6}{c}} %
\toprule
\textbf{Model} & \textbf{Method}
  & \textbf{WinoGrande}\,$\uparrow$
  & \textbf{PiQA}\,$\uparrow$
  & \textbf{HellaSwag}\,$\uparrow$
  & \textbf{ArcE}\,$\uparrow$
  & \textbf{ArcC}\,$\uparrow$
  & \textbf{Average}\,$\uparrow$ \\
\midrule

\multirow{4}{*}{LLaMA-7B}
  & FP16             & 69.93 & 78.67 & 56.97 & 75.21 & 41.89 & 64.53 \\
  & \cellcolor{gray!30}VQRound+RTN  & \cellcolor{gray!30}\textbf{65.75} & \cellcolor{gray!30}73.23 & \cellcolor{gray!30}49.69 & \cellcolor{gray!30}65.32 & \cellcolor{gray!30}31.23 & \cellcolor{gray!30}57.06 \\
  & GPTQ             & 65.43 & \textbf{74.27} & \textbf{51.62} & \textbf{67.34} & \textbf{34.90} & \textbf{58.72} \\
  & \cellcolor{gray!30}VQRound+{GPTQ} & \cellcolor{gray!30}64.25 & \cellcolor{gray!30}74.16 & \cellcolor{gray!30}50.62 & \cellcolor{gray!30}66.58 & \cellcolor{gray!30}32.34 & \cellcolor{gray!30}57.59 \\
\specialrule{.1em}{.25em}{.25em}

\multirow{4}{*}{LLaMA-13B}
  & FP16             & 72.77 & 79.16 & 59.92 & 77.40 & 46.42 & 67.13 \\
  & \cellcolor{gray!30}VQRound+RTN  & \cellcolor{gray!30}67.72 & \cellcolor{gray!30}\textbf{77.26} & \cellcolor{gray!30}53.62 & \cellcolor{gray!30}67.38 & \cellcolor{gray!30}\textbf{39.33} & \cellcolor{gray!30}61.90 \\
  & GPTQ             & \textbf{69.53} & 76.55 & 53.45 & 72.26 & 39.25 & \textbf{62.21} \\
  & \cellcolor{gray!30}VQRound+{GPTQ} & \cellcolor{gray!30}68.19 & \cellcolor{gray!30}76.88 & \cellcolor{gray!30}\textbf{54.01} & \cellcolor{gray!30}\textbf{72.43} & \cellcolor{gray!30}38.65 & \cellcolor{gray!30}62.03 \\
\specialrule{.1em}{.25em}{.25em}

\multirow{4}{*}{LLaMA2-7B}
  & FP16             & 69.06 & 78.07 & 57.13 & 76.30 & 43.43 & 64.80 \\
  & \cellcolor{gray!30}VQRound+RTN & \cellcolor{gray!30}63.77 & \cellcolor{gray!30}74.21 & \cellcolor{gray!30}49.59 & \cellcolor{gray!30}67.97 & \cellcolor{gray!30}33.11 & \cellcolor{gray!30}57.73 \\
  & GPTQ  & 62.51 & 72.91 & 49.31 & 65.82 & 33.79 & 56.87 \\
  & \cellcolor{gray!30}VQRound+{GPTQ}   & \cellcolor{gray!30}\textbf{65.59} & \cellcolor{gray!30}\textbf{74.59} & \cellcolor{gray!30}\textbf{49.71} & \cellcolor{gray!30}\textbf{70.88} & \cellcolor{gray!30}\textbf{38.31} & \cellcolor{gray!30}\textbf{59.82} \\
\specialrule{.1em}{.25em}{.25em}

\multirow{4}{*}{LLaMA2-13B}
  & FP16             & 72.38 & 79.05 & 60.07 & 79.38 & 48.46 & 67.87 \\
  & \cellcolor{gray!30}VQRound+RTN & \cellcolor{gray!30}66.93 & \cellcolor{gray!30}\textbf{76.28} & \cellcolor{gray!30}53.81 & \cellcolor{gray!30}71.72 & \cellcolor{gray!30}38.74 & \cellcolor{gray!30}61.50 \\
  & GPTQ             & 68.51 & 76.12 & 53.75 & 72.01 & 39.51 & 61.98 \\
  & \cellcolor{gray!30}VQRound+{GPTQ}   & \cellcolor{gray!30}\textbf{69.14} & \cellcolor{gray!30}76.17 & \cellcolor{gray!30}\textbf{55.03} & \cellcolor{gray!30}\textbf{73.06} & \cellcolor{gray!30}\textbf{39.68} & \cellcolor{gray!30}\textbf{62.62} \\
\bottomrule
\end{tabular}
}
\vspace{-2mm}
\end{table*}

\subsection{Ablation Study}

\paragraph{Residual Initialization}
We study the effect of Hessian guidance in residual initialization. 
Specifically, we compare two integer estimation strategies under the same quantization framework: initialization without curvature information and initialization incorporating Hessian statistics $\mathbf{H}$. Tab.~\ref{tab:init-residual} reports perplexity under both soft and hard rounding. 
Across model sizes and rounding directions, incorporating Hessian information consistently reduces perplexity compared to its non-Hessian counterpart. These results indicate that curvature-aware initialization provides a more suitable starting point for residual optimization, leading to improved quantization performance.

We further examine the effectiveness of VQ during the initialization. As shown in Tab.~\ref{tab:init_comparison} and Tab.~\ref{tab:qwen3_init_comparison}, VQ-based initialization consistently yields lower perplexity than alternative methods, revealing that VQ achieves superior optimization starting point.

\begin{table}[ht]
  \centering
  \caption{Perplexity comparison of residual integer initialization strategies with and without Hessian guidance under soft and hard rounding in OPT models.}
  \label{tab:init-residual}
  \small
  \renewcommand{\arraystretch}{1.2}
  \setlength{\tabcolsep}{10pt}
  
  \begin{tabular}{l cc cc}
    \toprule
    \multirow{3}{*}{\textbf{Init Method}}
      & \multicolumn{4}{c}{\textbf{OPT Model Size}} \\
    \cmidrule(lr){2-5}
      & \multicolumn{2}{c}{\textbf{125M}}
      & \multicolumn{2}{c}{\textbf{350M}} \\
    \cmidrule(lr){2-3}\cmidrule(lr){4-5}
      & \textbf{Soft} & \textbf{Hard}
      & \textbf{Soft} & \textbf{Hard} \\
    \midrule
    $W_{\text{Q}}/\text{s}$                    & 63.54 & 58.04 & 23.95 & 27.92 \\
    $W_{\text{Q}}/\text{s}$ w.\ $\mathbf{H}$    & 46.11 & 40.85 & 23.42 & 24.36 \\
    \bottomrule
  \end{tabular}
  \vspace{-2mm}
\end{table}

\textbf{Parameter Efficiency.} We compare the number of trainable parameters required by AdaRound and VQRound across different model scales. As shown in Tab.~\ref{tab:trainables-round}, VQRound introduces substantially fewer trainable parameters than AdaRound. For example, on LLaMA-13B, VQRound requires only 0.07\% of the parameters used by AdaRound. This reduction becomes more pronounced as model size increases, indicating that VQRound scales more favorably to larger models. Despite the significant parameter savings, VQRound maintains competitive performance, demonstrating that VQ reparameterization provides an efficient alternative to element-wise rounding optimization. 

\begin{table}[h]
  \centering
  \vspace{-2mm}
  \caption{Trainable parameters comparison between AdaRound and VQRound across model scales.}
  \label{tab:trainables-round}
  \small
  \renewcommand{\arraystretch}{1.12}
  \setlength{\tabcolsep}{6pt}
  
  \begin{tabular}{l cc c}
    \toprule
    \textbf{Model} & \multicolumn{2}{c}{\textbf{Trainable Params}} & \textbf{Ratio (\%)} \\
    \cmidrule(lr){2-3}
                    & \textbf{AdaRound} & \textbf{VQRound} & \textbf{VQ/Ada} \\
    \midrule
    OPT-1.3B  & 1.21B  & 4.72M  & 0.39\% \\
    OPT-2.7B  & 2.16B  & 6.29M  & 0.29\% \\
    LLaMA-7B  & 6.48B  & 7.34M  & 0.11\% \\
    LLaMA-13B & 12.69B & 9.18M  & 0.07\% \\
    \bottomrule
  \end{tabular}
  \vspace{-2mm}
\end{table}

\textbf{Codebook Design.}
We further study the impact of the codebook size $k$ and vector dimension $d$ on quantization performance. Tab.~\ref{tab:codebook_size} reports 4-bit results on OPT-1.3B. The configuration with $k=2^{12}$ and $d=8$ achieves the lowest perplexity on WikiText2, while remaining competitive on C4. Increasing $k$ slightly increases the number of trainable parameters without consistent performance gains, whereas reducing $d$ leads to less expressive residual representation. Overall, $k=2^{12}$ and $d=8$ provide a favorable balance between model capacity and efficiency, and is adopted as the default setting in subsequent experiments.

\begin{table}[h]
  \centering
  \caption{VQRound 4-bit results under different codebook settings on OPT-1.3B. $k$ denotes the number of centroids and $d$ the vector dimension.}
  \label{tab:codebook_size}
  \small
  \renewcommand{\arraystretch}{1.12}
  \setlength{\tabcolsep}{13pt}
  
  \begin{tabular}{l cc}
    \toprule
    \multirow{2}{*}{\textbf{Codebook Setting}} & \multicolumn{2}{c}{\textbf{PPL}} \\
    \cmidrule(lr){2-3}
     & \textbf{WikiText2} & \textbf{C4} \\
    \midrule
    $k=2^{12}$, $d=4$ & 15.84 & \textbf{17.10} \\
    $k=2^{12}$, $d=8$ & \textbf{15.48} & 17.28 \\
    $k=2^{16}$, $d=4$ & 15.73 & 17.11 \\
    $k=2^{16}$, $d=8$ & 16.13 & 17.23 \\
    \bottomrule
  \end{tabular}
  \vspace{-2mm}
\end{table}

\section{Conclusion}
\label{conclusion}
We revisit adaptive rounding from an efficiency perspective and propose VQRound, a parameter-efficient reparameterization method that represents the rounding matrix using a compact vector-quantized codebook. Unlike low-rank optimization methods, VQRound directly controls the element-wise worst-case error under the $L_\infty$ norm, which is particularly important for heavy-tailed weight distributions in large language models. In addition to blockwise reconstruction, VQRound can be optimized through an E2E fine-tuning that enables cross-layer error compensation using a small calibration set. 
Extensive experiments across multiple model families demonstrate that VQRound achieves better convergence than adaptive rounding under the same optimization budget, while requiring only a small fraction of trainable parameters. These results suggest that adaptive rounding can be made both scalable and efficient through structured reparameterization.

\section*{Impact Statement}
This paper is working on advancing the field of Machine Learning. We have discussed the potential impact of our work in \S\ref{conclusion}. While our work may have various societal implications, we do not identify any specific consequences that require individual highlighting beyond the impacts of method efficiency.

\bibliography{main}
\bibliographystyle{icml2026}
\appendix
\onecolumn
\newpage
\appendix
\section{Appendix}
\subsection{Additional Experiment Result}
\label{appendix:results}

Tab.~\ref{tab:opt-c4} reports perplexity on C4 across the OPT model family under 4-bit and 3-bit quantization. 
Across model scales, VQRound consistently improves over vanilla RTN and further reduces perplexity when combined with GPTQ or OmniQuant in most settings. These results demonstrate that VQRound effectively reduces quantization error and remains robust under aggressive bit-width reduction.

\begin{table}[H]
    \centering
    \caption{C4 Perplexity in OPT model family. Lower is better.}
    \label{tab:opt-c4}
    \footnotesize
    \renewcommand{\arraystretch}{1.12}
    \setlength{\tabcolsep}{10.0pt}
    \begin{tabular}{lc
                    cccccc}
    \toprule
    \multirow{2}{*}{\textbf{Precision}} & \multirow{2}{*}{\textbf{Method}}
    & \multicolumn{6}{c}{\textbf{OPT Model Size}} \\
    \cmidrule(lr){3-8}
    & & \textbf{125M} & \textbf{350M} & \textbf{1.3B} & \textbf{2.7B} & \textbf{6.7B} & \textbf{13B} \\
    \midrule
    \multirow{1}{*}{FP16}
    & Baseline
    & 26.56 & 22.59 & 16.07 & 14.34 & 12.71 & 12.06  \\
    \specialrule{1pt}{3pt}{3pt}
    
    \multirow{6}{*}{4 bits}
    & RTN                   & 33.91 & 26.21 & 24.51 & 18.43 & 14.36 & 13.36 \\
    & \cellcolor{gray!30}VQRound+RTN & \cellcolor{gray!30}28.79 & \cellcolor{gray!30}24.39 & \cellcolor{gray!30}17.28 & \cellcolor{gray!30}15.27 & \cellcolor{gray!30}13.27 & \cellcolor{gray!30}12.53 \\
    & GPTQ                  & 29.22 & 24.63 & 16.97 & 15.00 & 13.18 & \textbf{12.26}\\
    & \cellcolor{gray!30}VQRound+{GPTQ}  & \cellcolor{gray!30}\textbf{28.72} & \cellcolor{gray!30}\textbf{23.44} & \cellcolor{gray!30}\textbf{16.80} & \cellcolor{gray!30}\textbf{14.87} & \cellcolor{gray!30}\textbf{13.01} & \cellcolor{gray!30}12.29\\
    \cmidrule(lr){2-8}
    & OmniQuant & 26.90 & - & 15.82 & 14.08 & 12.68 & 12.18 \\
    & \cellcolor{gray!30}VQRound+{OmniQuant}   & \cellcolor{gray!30}\best{26.10} & \cellcolor{gray!30}\best{-} & \cellcolor{gray!30}\best{15.46} & \cellcolor{gray!30}\best{13.73} & \cellcolor{gray!30}\best{12.15} & \cellcolor{gray!30}\best{11.65} \\
    \specialrule{1pt}{3pt}{3pt} 
    \multirow{6}{*}{3 bits}
    & RTN                   & 834 & 55.49 & 5.2e3 & 1.1e4 & 5.3e3 & 3.1e3 \\
    & \cellcolor{gray!30}VQRound+RTN & \cellcolor{gray!30}39.76 & \cellcolor{gray!30}31.40 & \cellcolor{gray!30}22.57 & \cellcolor{gray!30}19.28 & \cellcolor{gray!30}15.57 & \cellcolor{gray!30}14.37 \\
    & GPTQ                  & 42.41 & 31.33 & 21.63 & 18.17 & 17.14 & 13.34 \\
    & \cellcolor{gray!30}VQRound+{GPTQ}  & \cellcolor{gray!30}\textbf{38.87} & \cellcolor{gray!30}\textbf{27.13} & \cellcolor{gray!30}\textbf{20.02} & \cellcolor{gray!30}\textbf{17.16} & \cellcolor{gray!30}\textbf{14.25} & \cellcolor{gray!30}\textbf{13.18} \\
    \cmidrule(lr){2-8}
    & OmniQuant & 36.43 & - & 19.57 & 18.16 & 14.87 & 16.37 \\
    & \cellcolor{gray!30}VQRound+{OmniQuant}   & \cellcolor{gray!30}\best{31.34} & \cellcolor{gray!30}\best{-} & \cellcolor{gray!30}\best{18.58} & \cellcolor{gray!30}\best{15.31} & \cellcolor{gray!30}\best{13.25} & \cellcolor{gray!30}\best{12.96} \\
    \bottomrule
    \end{tabular}
\end{table}

Tab.~\ref{tab:4bit-llama-commonsense} presents 4-bit zero-shot accuracy on common-sense reasoning benchmarks. 
VQRound maintains strong downstream performance across all evaluated LLaMA and LLaMA2 models. 
Compared to RTN, VQRound consistently improves average accuracy, while remaining competitive with GPTQ. 
Notably, VQRound or VQRound+GPTQ achieves the highest or near-highest average accuracy in multiple tasks, especially on LLaMa2-13B, indicating that the proposed reparameterization enhances quantization quality without sacrificing generalization. 
Overall, these results confirm that VQRound preserves both language modeling fidelity and downstream task performance.

\begin{table}[!htb]
    \centering
    \setlength{\tabcolsep}{20pt}
    \caption{4 bit zero-shot accuracy (\%) on commonsense benchmarks. Higher is better.}
    \label{tab:4bit-llama-commonsense}
    \footnotesize
    \setlength{\tabcolsep}{6pt}
    \renewcommand{\arraystretch}{1.15}
    
    \resizebox{0.95\linewidth}{!}{%
    \begin{tabular}{cc*{6}{c}} %
    \toprule
    \textbf{Model} & \textbf{Method}
      & \textbf{WinoGrande}\,$\uparrow$
      & \textbf{PiQA}\,$\uparrow$
      & \textbf{HellaSwag}\,$\uparrow$
      & \textbf{ArcE}\,$\uparrow$
      & \textbf{ArcC}\,$\uparrow$
      & \textbf{Average}\,$\uparrow$ \\
    \midrule
    
    \multirow{4}{*}{LLaMA-7B}
      & FP16             & 69.93 & 78.67 & 56.97 & 75.21 & 41.89 & 64.53 \\
      & \cellcolor{gray!30}VQRound+RTN  & \cellcolor{gray!30}\textbf{70.01} & \cellcolor{gray!30}77.53 & \cellcolor{gray!30}55.36 & \cellcolor{gray!30}73.86 & \cellcolor{gray!30}39.25 & \cellcolor{gray!30}63.20 \\
      & GPTQ             & 69.93 & 77.86 & \textbf{55.99} & \textbf{74.12} & 39.51 & \textbf{63.48} \\
      & \cellcolor{gray!30}VQRound+{GPTQ} & \cellcolor{gray!30}68.59 & \cellcolor{gray!30}\textbf{78.18} & \cellcolor{gray!30}55.17 & \cellcolor{gray!30}73.44 & \cellcolor{gray!30}\textbf{40.10} & \cellcolor{gray!30}63.10 \\
    \specialrule{.1em}{.25em}{.25em}
    
    \multirow{4}{*}{LLaMA-13B}
      & FP16             & 72.77 & 79.16 & 59.92 & 77.40 & 46.42 & 67.13 \\
      & \cellcolor{gray!30}VQRound+RTN  & \cellcolor{gray!30}71.59 & \cellcolor{gray!30}78.51 & \cellcolor{gray!30}58.45 & \cellcolor{gray!30}\textbf{76.52} & \cellcolor{gray!30}45.05 & \cellcolor{gray!30}66.02 \\
      & GPTQ          & 72.77 & \textbf{79.11} & \textbf{58.98} & 76.26 & 45.39 & \textbf{66.50} \\
      & \cellcolor{gray!30}VQRound+{GPTQ} & \cellcolor{gray!30}\textbf{72.85} & \cellcolor{gray!30}78.84 & \cellcolor{gray!30}58.78 & \cellcolor{gray!30}75.72 & \cellcolor{gray!30}\textbf{45.65} & \cellcolor{gray!30}66.37 \\
    \specialrule{.1em}{.25em}{.25em}
    
    \multirow{4}{*}{LLaMA2-7B}
      & FP16             & 69.06 & 78.07 & 57.13 & 76.30 & 43.43 & 64.80 \\
      & \cellcolor{gray!30}VQRound+RTN & \cellcolor{gray!30}68.11 & \cellcolor{gray!30}76.88 & \cellcolor{gray!30}55.55 & \cellcolor{gray!30}73.36 & \cellcolor{gray!30}40.27 & \cellcolor{gray!30}62.83 \\
      & GPTQ  & \textbf{68.59} & 76.88 & \textbf{55.87} & \textbf{75.13} & \textbf{41.13} & \textbf{63.52} \\
      & \cellcolor{gray!30}VQRound+{GPTQ}   & \cellcolor{gray!30}68.35 & \cellcolor{gray!30}\textbf{77.20} & \cellcolor{gray!30}55.47 & \cellcolor{gray!30}73.86 & \cellcolor{gray!30}40.27 & \cellcolor{gray!30}63.03 \\
    \specialrule{.1em}{.25em}{.25em}
    
    \multirow{4}{*}{LLaMA2-13B}
      & FP16             & 72.38 & 79.05 & 60.07 & 79.38 & 48.46 & 67.87 \\
      & \cellcolor{gray!30}VQRound+RTN & \cellcolor{gray!30}\textbf{72.22} & \cellcolor{gray!30}\textbf{78.94} & \cellcolor{gray!30}\textbf{59.21} & \cellcolor{gray!30}77.65 & \cellcolor{gray!30}\textbf{45.90} & \cellcolor{gray!30}\textbf{66.78} \\
      & GPTQ             & 70.96 & 78.02 & 58.74 & 77.44 & \textbf{45.90} & 66.21 \\
      & \cellcolor{gray!30}VQRound+{GPTQ}   & \cellcolor{gray!30}72.14 & \cellcolor{gray!30}78.73 & \cellcolor{gray!30}59.14 & \cellcolor{gray!30}\textbf{78.11} & \cellcolor{gray!30}45.39 & \cellcolor{gray!30}66.70 \\
    \bottomrule
\end{tabular}
}
\vspace{-2mm}
\end{table}

Tab.~\ref{tab:init_comparison} and Tab.~\ref{tab:qwen3_init_comparison} compare different reparameterization strategies after rounding initialization across the LLaMA family and Qwen3 family. VQ consistently achieves the lowest perplexity under both soft and hard rounding settings, while LoRA and Kronecker variants often exhibit noticeably higher perplexity or even instability which leads to value out of range (NaN). These results indicate that VQ produces substantially smaller initialization error than low-rank or Kronecker-based reparameterizations. In particular, the gap becomes more pronounced under hard rounding, suggesting that VQ better controls element-wise deviations and is more robust to local residual amplification. Overall, the results demonstrate that the VQ-based reparameterization provides a more faithful and stable initialization for subsequent optimization.

\begin{table}[!htb]
\centering
\setlength{\tabcolsep}{20pt}
\caption{Comparison of perplexity score of different reparameterization methods (VQ, LoRA, Kronecker) after rounding initialization across LLaMA and LLaMA2 models. Lower is better.}
\label{tab:init_comparison}
\resizebox{\textwidth}{!}{%
\begin{tabular}{llcccc}
\toprule
\multirow{2}{*}{Model} & \multirow{2}{*}{Method} & \multicolumn{2}{c}{Soft} & \multicolumn{2}{c}{Hard} \\ \cline{3-6} 
 &  & Wikitext2 & C4 & Wikitext2 & C4 \\ \midrule
\multirow{5}{*}{LLaMA 7B} & VQ & \textbf{5.78} & \textbf{7.46} & \textbf{6.38} & \textbf{8.19} \\
 & LoRA(Kaiming) & 11.08 & 16.01 & NaN & NaN \\
 & LoRA(SVD) & 6.70 & 8.89 & 10.08 & 13.54 \\
 & Kronecker(Kaiming) & 11.08 & 16.01 & NaN & NaN \\
 & Kronecker(SVD) & 11.09 & 16.06 & 292.67 & 270.67 \\ \midrule
\multirow{5}{*}{LLaMA 13B} & VQ & \textbf{5.17} & \textbf{6.88} & \textbf{5.53} & \textbf{7.29} \\
 & LoRA(Kaiming) & 7.24 & 10.56 & NaN & NaN \\
 & LoRA(SVD) & 5.77 & 7.79 & 7.38 & 10.02 \\
 & Kronecker(Kaiming) & 7.24 & 10.56 & NaN & NaN \\
 & Kronecker(SVD) & 7.24 & 10.58 & 123.46 & 120.51 \\ \midrule
\multirow{5}{*}{LLaMA2 7B} & VQ & \textbf{5.57} & \textbf{7.38} & \textbf{6.17} & \textbf{8.22} \\
 & LoRA(Kaiming) & NaN & 59.97 & NaN & NaN \\
 & LoRA(SVD) & 7.03 & 9.92 & 17.80 & 24.64 \\
 & Kronecker(Kaiming) & NaN & 59.97 & NaN & NaN \\
 & Kronecker(SVD) & NaN & 59.40 & 1338.83 & 928.37 \\ \midrule
\multirow{5}{*}{LLaMA2 13B} & VQ & \textbf{4.96} & \textbf{6.82} & \textbf{5.25} & \textbf{7.23} \\
 & LoRA(Kaiming) & 15.78 & 25.49 & NaN & NaN \\
 & LoRA(SVD) & 5.78 & 8.09 & 8.14 & 12.79 \\
 & Kronecker(Kaiming) & 15.78 & 25.49 & NaN & NaN \\
 & Kronecker(SVD) & 15.83 & 25.45 & 446.28 & 394.82 \\ \bottomrule
\end{tabular}%
}
\end{table}

\begin{table}[!htb]
\centering
\setlength{\tabcolsep}{20pt}
\caption{Comparison of perplexity score of different reparameterization methods (VQ, LoRA, Kronecker) after rounding initialization across the Qwen3 model family. Lower is better.}
\label{tab:qwen3_init_comparison}
\resizebox{\textwidth}{!}{%
\begin{tabular}{llcccc}
\toprule
\multirow{2}{*}{Model} & \multirow{2}{*}{Method} & \multicolumn{2}{c}{Soft} & \multicolumn{2}{c}{Hard} \\ \cline{3-6} 
 &  & Wikitext2 & C4 & Wikitext2 & C4 \\ \midrule
\multirow{5}{*}{Qwen3 0.6B} & VQ & \textbf{22.84} & \textbf{32.60} & \textbf{41.19} & \textbf{56.33} \\
 & LoRA(Kaiming) & NaN & NaN & NaN & NaN \\
 & LoRA(SVD) & 113.65 & 138.84 & 1436.11 & 1759.19 \\
 & Kronecker(Kaiming) & NaN & NaN & NaN & NaN \\
 & Kronecker(SVD) & NaN & NaN & NaN & NaN \\ \midrule
\multirow{5}{*}{Qwen3 1.7B} & VQ & \textbf{19.28} & \textbf{24.77} & \textbf{28.64} & \textbf{32.27} \\
 & LoRA(Kaiming) & 3448.73 & 2245.94 & NaN & NaN \\
 & LoRA(SVD) & 58.72 & 66.72 & 3181.12 & 1798.36 \\
 & Kronecker(Kaiming) & 3448.73 & 2245.94 & NAN & NaN \\
 & Kronecker(SVD) & 2618.89 & 1669.90 & 1323702.75 & 979412.50 \\ \midrule
\multirow{5}{*}{Qwen3 4B} & VQ & \textbf{13.97} & \textbf{20.62} & \textbf{17.33} & \textbf{25.11} \\
 & LoRA(Kaiming) & 26.10 & 41.61 & NaN & NaN \\
 & LoRA(SVD) & 20.41 & 31.70 & 144.53 & 199.08 \\
 & Kronecker(Kaiming) & 26.10 & 41.61 & NaN & NaN \\
 & Kronecker(SVD) & 26.10 & 41.61 & 3747.27 & 4061.17 \\ \midrule
\multirow{5}{*}{Qwen3 8B} & VQ & \textbf{10.03} & \textbf{16.04} & \textbf{11.74} & \textbf{18.44} \\
 & LoRA(Kaiming) & 23.59 & 34.13 & NaN & NaN \\
 & LoRA(SVD) & 15.87 & 24.18 & 100.09 & 140.51 \\
 & Kronecker(Kaiming) & 23.59 & 34.13 & NaN & NaN \\
 & Kronecker(SVD) & 23.62 & 34.12 & 385.47 & 405.06 \\ \bottomrule
\end{tabular}%
}
\end{table}

Tab.~\ref{tab:e2e_compare} compares VQRound+RTN and AdaRound+RTN under end-to-end finetuning on WikiText2 and C4. Across all feasible model sizes, VQRound consistently achieves lower perplexity than AdaRound, with the performance gap widening as model scale increases. Notably, while VQRound remains stable up to OPT-13B, AdaRound runs out of memory starting from OPT-2.7B on a single NVIDIA A6000 GPU. This highlights not only the improved quantization quality of VQRound, but also its superior scalability under practical hardware constraints. The results confirm that the proposed reparameterization significantly reduces the optimization footprint while maintaining or improving accuracy.

\begin{table}[htbp]
\centering
\caption{Comparison of VQRound+RTN and AdaRound+RTN under E2E finetuning. 
Perplexity is reported on WikiText2 and C4. AdaRound runs out of memory (OOM) starting from OPT-2.7B on a single NVIDIA A6000.}
\label{tab:e2e_compare}
\begin{tabular}{lcccc}
\toprule
\multirow{2}{*}{Model} & \multicolumn{2}{c}{VQRound+RTN} & \multicolumn{2}{c}{AdaRound+RTN} \\
\cmidrule(lr){2-3} \cmidrule(lr){4-5}
 & WikiText2 & C4 & WikiText2 & C4 \\
\midrule
OPT-125M & 30.69 & \textbf{28.79} & \textbf{30.61} & 28.92 \\
OPT-350M & \textbf{23.77} & \textbf{24.39} & 26.08 & 25.72 \\
OPT-1.3B  & \textbf{15.48} & \textbf{17.28} & 18.33 & 18.74 \\
OPT-2.7B  & \textbf{13.30} & \textbf{15.27} & \textbf{OOM} & \textbf{OOM} \\
\bottomrule
\end{tabular}
\end{table}

Tab.~\ref{tab:reconstruction_strategy_full} compares blockwise reconstruction and end-to-end (E2E) finetuning across OPT, LLaMA, and LLaMA2 model families. The two approaches achieve comparable performance overall, with E2E finetuning slightly outperforming block-wise reconstruction on most model sizes, particularly for medium and large OPT models. These results indicate that VQRound supports both block-wise and global optimization strategies with competitive effectiveness.

\begin{table}[htbp]
\centering
\caption{Perplexity (PPL) of VQRound End-to-End finetuning vs. Block-wise reconstruction across OPT, LLaMA and LLaMA2 model families. Lower is better.}
\label{tab:reconstruction_strategy_full}
\renewcommand{\arraystretch}{1.1} 
\setlength{\tabcolsep}{6pt}    
\resizebox{0.7\linewidth}{!}{%
\begin{tabular}{ll cc cc}
\toprule
\multirow{2}{*}{\textbf{Model Family}} & \multirow{2}{*}{\textbf{Size}} & \multicolumn{2}{c}{\textbf{Block-wise}} & \multicolumn{2}{c}{\textbf{End-to-End}} \\
\cmidrule(lr){3-4} \cmidrule(lr){5-6}
& & ~~~~~WikiText2~~~~~ & ~~~~~C4~~~~~ & ~~~~~WikiText2~~~~~ & ~~~~~C4~~~~~ \\
\midrule
\multirow{6}{*}{OPT} 
& 125M & 31.08 & 28.97 & \textbf{30.69} & \textbf{28.79} \\
& 350M & \textbf{23.44} & \textbf{24.05} & 23.77 & 24.39 \\
& 1.3B & 15.84 & 17.31 & \textbf{15.48} & \textbf{17.28} \\
& 2.7B & \textbf{13.24} & \textbf{15.27} & 13.30 & \textbf{15.27} \\
& 6.7B & 11.30 & 13.35 & \textbf{11.26} & \textbf{13.27} \\
& 13B  & 10.78 & 12.56 & \textbf{10.66} & \textbf{12.53} \\
\midrule
\multirow{2}{*}{LLaMA} 
& 7B   & 6.18 & 7.89 & \textbf{6.13} & \textbf{7.88} \\
& 13B  & \textbf{5.40} & 7.18 & 5.42 & \textbf{7.17} \\
\midrule
\multirow{2}{*}{LLaMA2} 
& 7B   & 6.00 & 7.91 & \textbf{5.90} & \textbf{7.88} \\
& 13B  & \textbf{5.19} & 7.14 & \textbf{5.19} & \textbf{7.13} \\
\bottomrule
\end{tabular}
}
\end{table}

\clearpage
\subsection{Pseudocode}
\label{algo}

We summarize the proposed Hessian-aware residual initialization in Alg.~\ref{alg:hessian_init} and the overall training procedure of VQRound in Alg.~\ref{alg:finetune}.

\begin{algorithm}[h]
\caption{Hessian-aware Rounding Matrix Initialization}
\label{alg:hessian_init}
\begin{algorithmic}[1]

\REQUIRE Weight matrix $W$; Hessian matrix $\mathbf{H}$; blocksize; percdamp; per-channel quantization parameters quantizer, $(scale, zero, q_{max})$
\ENSURE Quantized weights $W_Q$; Base integer matrix $B$; Rounding matrix $\tilde{H}$

\STATE // Precompute damped Hessian inverse factor
\STATE $damp \leftarrow percdamp \cdot \mathrm{mean}(\mathrm{diag}(\mathbf{H}))$
\STATE Add $damp$ to diagonal of $\mathbf{H}$
\STATE $\mathbf{H} \leftarrow \mathrm{cholesky}(\mathbf{H})$
\STATE $\mathbf{H} \leftarrow \mathrm{cholesky\_inverse}(\mathbf{H})$
\STATE $\mathbf{H} \leftarrow \mathrm{cholesky}(\mathbf{H}, upper=True)$
\STATE $\mathbf{H}_{inv} \leftarrow \mathbf{H}$

\STATE Initialize $W_Q \leftarrow 0$, $B \leftarrow 0$, $R \leftarrow 0$

\FOR{$i_1 = 0$ to $columns$ step blocksize}
    \STATE $i_2 \leftarrow \min(i_1 + blocksize, columns)$
    \STATE $W_1 \leftarrow W[:, i_1:i_2]$
    \STATE $\mathbf{H}_{inv_1}  \leftarrow \mathbf{H}_{inv}[i_1:i_2, i_1:i_2]$
    \STATE Initialize $Error$

    \FOR{$j = 0$ to $(i_2-i_1-1)$}
        \STATE $w \leftarrow W_1[:, j]$
        \STATE $d \leftarrow \mathbf{H}_{inv_1} [j,j]$

        \STATE $q \leftarrow quantizer(w, scale, zero, q_{max})$
        \STATE $W_Q[:, i_1+j] \leftarrow q$

        \STATE $err \leftarrow (w - q)/d$

        \STATE $W_1[:, j:] \leftarrow W_1[:, j:] - err \cdot \mathbf{H}_{inv_1} [j, j:]$

        \STATE Store $err$ in $Error$

        \STATE $u \leftarrow w/scale$
        \STATE $b\_col \leftarrow \lfloor u \rfloor$
        \STATE $B[:, i_1+j] \leftarrow b\_col$
        \STATE $\tilde{H}[:, i_1+j] \leftarrow \mathrm{clip}(u - b\_col - err,\ 0,1)$

    \ENDFOR

    \IF{$i_2 < columns$}
        \STATE $W[:, i_2:] \leftarrow W[:, i_2:] - Error \cdot \mathbf{H}_{inv}[i_1:i_2, i_2:]$
    \ENDIF

\ENDFOR

\STATE \textbf{return} $W_Q$, $B$, $\tilde{H}$

\end{algorithmic}
\end{algorithm}

\paragraph{Hessian-aware residual initialization.}
Before end-to-end optimization, we construct an informed initialization of the rounding matrix by incorporating second-order curvature information. As detailed in Alg.~\ref{alg:hessian_init}, we sequentially quantize weight columns using RTN and compute their residuals. Each residual is scaled by the corresponding diagonal entry of the damped Hessian inverse factor and propagated to the remaining unprocessed columns, following a GPTQ-style second-order error compensation scheme. 

The corrected residual is then converted into a base integer component $B$ and a fractional residual $R \in [0,1]$. The residual $R$ is then used for VQ reparameterization. Since the zero-point is integer-valued, it does not affect the fractional component and is therefore omitted in constructing $B$ and $R$. This Hessian-aware initialization substantially reduces the initial deviation from the full-precision weights, stabilizes subsequent optimization, and improves final quantization performance.

\paragraph{End-to-end codebook optimization.}
Starting from the initialized rounding signal, we optimize the VQ codebook parameters in an end-to-end fashion (Alg.~\ref{alg:finetune}). All model parameters are frozen except the codebook, and we minimize a distillation objective between teacher and student logits, combined with a rounding regularizer. 

\begin{algorithm}[H]
    \caption{VQRound end-to-end finetuning}
    \label{alg:finetune}
    \small
    \begin{algorithmic}
        \REQUIRE Teacher model $\mathcal{M}_t$, Student model $\mathcal{M}_s$; Frozen FP weights $\mathbf{W}$, per-channel scale $s$, zero-point $Z$; fixed VQ indices $I=\{i_\ell\}_{\ell=1}^{L}$; initial codebook $\mathcal{C}=\{c_k\}_{k=1}^{K}$; calibration dataset $\mathcal{D}$; temperature $T$; rectified sigmoid $h(\cdot)$; rounding regularizer $\mathcal{R}(\cdot)$; rounding regularization weight $\lambda$; steps $N$; Anneal parameter $(\beta_{\text{high}}, \beta_{\text{low}})$
        \STATE \textbf{Freeze} all params of $\mathcal{M}_s$ except the codebook 
        \STATE \textbf{Init Adam optimizer} on $\mathcal{C}$
        \FOR{$t \gets 1$ \textbf{to} $N$}
            \STATE $x \gets \mathrm{NextSample}(\mathcal{D})$; \text{Batch size = 1}
            \STATE $\hat{A} \gets \mathrm{Reshape}([\,c_{i_1},\dots,c_{i_l}\,])$
            \STATE $\hat{W} \gets s\big(\mathrm{clip}(\lfloor W/s \rfloor + h(\hat{A}) + Z, \,0, \,Q_{\max})-Z\big)$
            \STATE $\hat{y} \gets \mathcal{M}_s(x;\hat{W})$;\quad $y \gets \mathcal{M}_t(x)$
            \STATE $\mathcal{L}_{\mathrm{KD}} \gets \mathrm{KL} \ \!\big(\mathrm{softmax}(\hat{y}/T)\,\|\,\mathrm{softmax}(y/T)\big)$
            \STATE $\beta_t \gets \mathrm{Anneal} \ \!\big(\beta_{\text{high}},\beta_{\text{low}},t,N\big)$
            \STATE $\mathcal{L} \gets \mathcal{L}_{\mathrm{KD}} + \lambda\,\mathcal{R}(\hat{H};\beta_t)$
            \STATE \textbf{Update} $\mathcal{C} \gets \mathrm{Adam}(\mathcal{C}, \nabla_{\mathcal{C}})$
        \ENDFOR
    \end{algorithmic}
\end{algorithm}

\subsection{Proof of Theorem and Corollary}
\label{sec:proof}

\subsubsection{Proof of Theorem~\ref{thm:lipschitz_contraction}}
\label{sec:proof:lipschitz_contraction}

\begin{proof}
Define $g(x)=\gamma+(\zeta-\gamma)\sigma(x)$. Its derivative satisfies
\[
g'(x)=(\zeta-\gamma)\sigma(x)(1-\sigma(x))\le (\zeta-\gamma)/4,
\]
since $\max_{u\in(0,1)}u(1-u)=1/4$. Hence $g$ is $\mathcal{C}$-Lipschitz with
$\mathcal{C}=(\zeta-\gamma)/4$. As $\mathrm{clip}(\cdot,0,1)$ is $1$-Lipschitz, their
composition $h$ is also $\mathcal{C}$-Lipschitz. Applying this bound entrywise yields the
result.
\end{proof}

\subsubsection{Proof of Corollary~\ref{cor:lipschitz_tail_transfer}}
\label{sec:proof:lipschitz_tail_transfer}

\begin{proof}
The event $\{|\Delta H_{ij}|>\varepsilon\}$ implies
$\{|\Delta{A}_{ij}|>\varepsilon/\mathcal{C}\}$ by
Eq.~\ref{eq:elemwise_lipschitz_main}. Taking probabilities yields the claim.
\end{proof}

\subsubsection{Proof of Theorem~\ref{thm:clipping_threshold}}
\label{sec:proof:lipschitz_contraction}
\begin{proof}
By Theorem~\ref{thm:lipschitz_contraction},
\[
|g(\tilde{A}_{ij})-g({A}{ij})|\le \mathcal{C}|\tilde{A}_{ij}-A{ij}|.
\]
If $|\tilde{A}_{ij}-A_{ij}|>\delta_{ij}/\mathcal{C}$ holds, this displacement exceeds the margin
$\delta_{ij}$, implying $g(\tilde{A}_{ij})\notin(0,1)$. Applying the clipping
operator yields $\tilde H_{ij}\in\{0,1\}$.
\end{proof}

\subsubsection{Proof of Corollary~\ref{cor:clipping_probability}}
\label{sec:proof:clipping_probability}

\begin{proof}
Theorem~\ref{thm:clipping_threshold} implies the event inclusion
$\{|\Delta{A}_{ij}|>\delta_{ij}/\mathcal{C}\}\subseteq\{\tilde H_{ij}\in\{0,1\}\}$.
Taking probabilities yields the result.
\end{proof}

\subsection{Detailed Analysis of SVD and Kronecker Product Decomposition}
\label{sec:svd_kronecker}

One intuitive approach to reducing the parameter count of the rounding matrix $H \in [0, 1]^{m \times n}$ is enforcing low-rankness as employed in CBQ~\cite{ding2025cbqcrossblockquantizationlarge}. These methods typically optimize a global energy metric, such as the Frobenius norm $\|E\|_{F}=\|H - \hat{H}\|_F$, where $\hat{H}$ is the low-rank approximation of $H$.

\textbf{Low-Rank Decomposition}: This is the most straightforward way to approximate the rounding matrix with a reduced set of parameters, \textit{e.g.}, $H_{\text{LR}} = AB^\top$, where $A \in \mathbb{R}^{m \times r}$ and $B \in \mathbb{R}^{n \times r}$. The approximation can be done via standard singular-value decomposition (SVD). Using the Eckart-Young-Mirsky theorem~\cite{eckart1936approximation}, we have the following approximation error:
\begin{equation}
\label{eq:lora_frobenius}
    \|E_{\text{LR}}\|_F = \|H - H_\text{LR}\| = (\sum_{k>r} \sigma_k^2)^{1/2} ,
\end{equation}
\begin{equation}
\label{eq:lora_l2}
    \|E_{\text{LR}}\|_2 = \|H - H_\text{LR}\|_2 = \sigma_{r+1},
\end{equation}
where $\|E_{\text{LR}}\|_2$ denotes the spectral norm.

\textbf{Kronecker Product Decomposition}: Kronecker product is another feasible way to approximate the rounding matrix $H$ with low parameter counts, \textit{i.e.}, $H_{kron} = A \otimes B$, where $A\in \mathbb{R}^{a\times c}$ and $B\in \mathbb{R}^{b\times d}$, $m=a\cdot b$ and $n=c\cdot d$.

Kronecker product is a relatively complex matrix operation, thus we denote its result through Van Loan-Pitsiantis rearrangement~\cite{vanloan1993approximation}: $R(A \otimes B) = vec(A)vec(B)^\top$. By
doing this, we can still optimize the Kronecker product according to Singular Value Decomposition (SVD): $R(H)=\sum^r_{k=1}\sigma_ku_kv^\top_k$. Due to the properties of this rearrangement~\cite{eckart1936approximation}:
\begin{equation}
    \|H\|_F = \|R(H)\|_F
\end{equation}
Based on the conclusion from Eq.~\ref{eq:lora_frobenius}, we found that its Frobenius norm satisfies:
\begin{equation}
    \|E_{Kron}\|_F=\|H-H_{Kron}\|_F=(\sum_{k=1}^r\sigma_k(R(H))^2)^{1/2}
\end{equation}
which is equivalent to a low-rank decomposition of rank $k$. Similarly, in the Kronecker decomposition, we are still minimizing its Frobenius norm with regard to the singular value.

To approximate the latent rounding matrix, energy-based metrics like the Frobenius norm can be misleading.

We emphasize the element-wise worst case error, defined as infinity norm $\|E\|_{\infty} \triangleq \max_{i,j}|E_{ij}|$. For any error matrix $E$,

\begin{equation}
\label{eq:norm}
\|E\|_{\infty} \;\le\; \|E\|_{2} \;\le\; \|E\|_{F},
\end{equation}
where $\|E\|_{2}$ denotes the spectral norm of the error matrix. $\|E\|_{\infty}$ is more strict than $\|E\|_2$ and $\|E\|_F$ in controlling the upper bound of loss. In practice, prioritizing a small $\|E\|_{\infty}$ (\textit{i.e.} minimizing the worst-case per-weight error) curbs outlier rounding decisions and has been found to correlate better with downstream performance.

\subsection{Experiment setting}
\label{appendix:exp_setting}
To facilitate reproducibility, we detail the experimental settings and hyperparameters. For VQ initialization, a codebook of 4096 ($2^{12}$) centroids with vector dimension $D=8$ is employed. Each layer undergoes 100 iterations of K-Means clustering, providing a balance between search quality and initialization efficiency. During end-to-end fine-tuning, the codebook is optimized with the Adam optimizer~\citep{adam}. A unified hyperparameter configuration is adopted across models: the learning rate is set to $1e-2$, the rounding regularization coefficient $\lambda$ to $1e-2$, and the annealing schedule for $\beta$ decreases from 20 to 2. Fine-tuning is conducted for 5000 steps, with the first 10\% used as a distillation-only warm-up phase to ensure stable convergence, after which the rounding loss is incorporated into the training objective. All experiments are performed on 128 randomly sampled sequences from the C4 dataset~\citep{c4dataset} with length 2048. To accelerate initialization, GPU-accelerated K-Means clustering is implemented using FAISS~\citep{douze2025faisslibrary}. All experiments are conducted on a single NVIDIA RTX A6000 GPU.

\subsection{Ackonwledgement}
The computation results in the paper are conducted in the Swiss National Supercomputing Center (CSCS). We thank for the relevant personnel and institutions for providing access and computational resources for our research.

\newpage
\subsection{Figure}
\label{sec:figure}

Fig.~\ref{fig:distribution_comparison} illustrates the error density distribution of the parameter matrix $A$ and the final rounding matrix $H$ under different reparameterization methods. In the error distribution of both metrics, VQRound (blue) exhibits significantly higher density near zero and a much narrower distribution range compared to LoRA (red) and Kronecker (green). These results provide compelling evidence that VQ achieves superior element-wise error control during the initialization period. This establishes a practical foundation for precision recovery under further quantization operations.
\begin{figure*}[h]
    \centering
    \begin{subfigure}[b]{0.48\textwidth}
        \centering
        \includegraphics[width=\linewidth, height=0.55\textwidth]{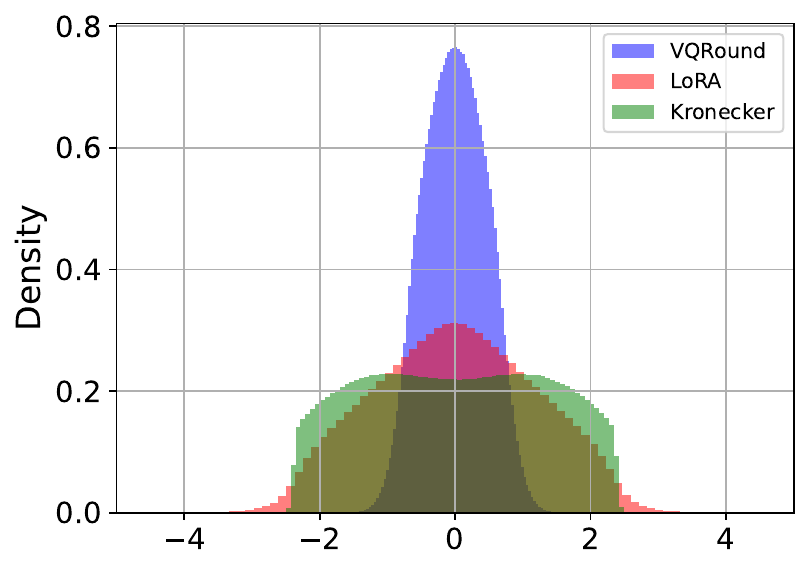}
        \caption{Error density distribution of $A - \hat{A}$.}
        \label{fig:dist_alpha}
    \end{subfigure}
    \hfill
    \begin{subfigure}[b]{0.48\textwidth}
        \centering
        \includegraphics[width=\linewidth, height=0.55\textwidth]{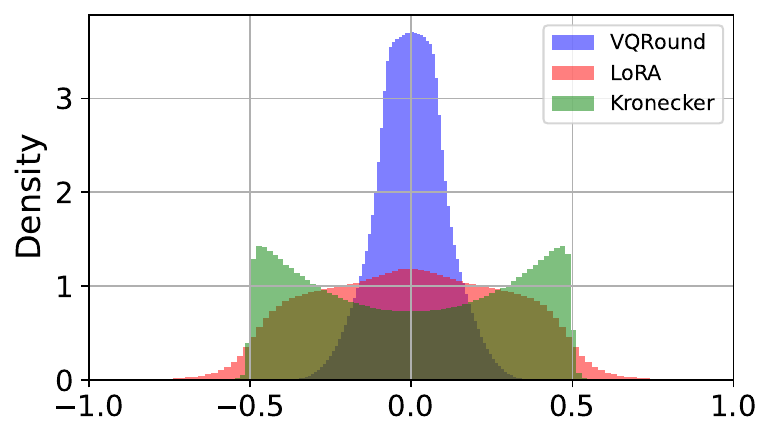}
        \caption{Error density distribution of $H - \hat{H}$.}
        \label{fig:dist_H}
    \end{subfigure}
    \caption{Comparison of error density distributions between VQRound, LoRA, and Kronecker on parameter matrix $A$ and rounding matrix $H$ after initialization. VQRound (blue) demonstrates a significantly higher density near zero, indicating superior element-wise error control.}
    \label{fig:distribution_comparison}
\end{figure*}

Fig.~\ref{fig:svd} shows the distribution of singular values across specific weight matrix on OPT-1.3B, LLaMA-2-7B, and LLaMA-3-8B. Across different model architectures, singular values maintain substantial magnitude ($> 10^0$) even at high ranks ($r=64$). These findings suggest that low-rank initialization methods like LoRA and Kronecker suffer significant information loss by discarding the higher-order information, thus explaining why low-rank approaches are less effective than VQ in adaptive rounding.
\begin{figure}[!htb]
\label{fig:svd}
\centering
\includegraphics[width=\linewidth, height=0.5\textwidth]{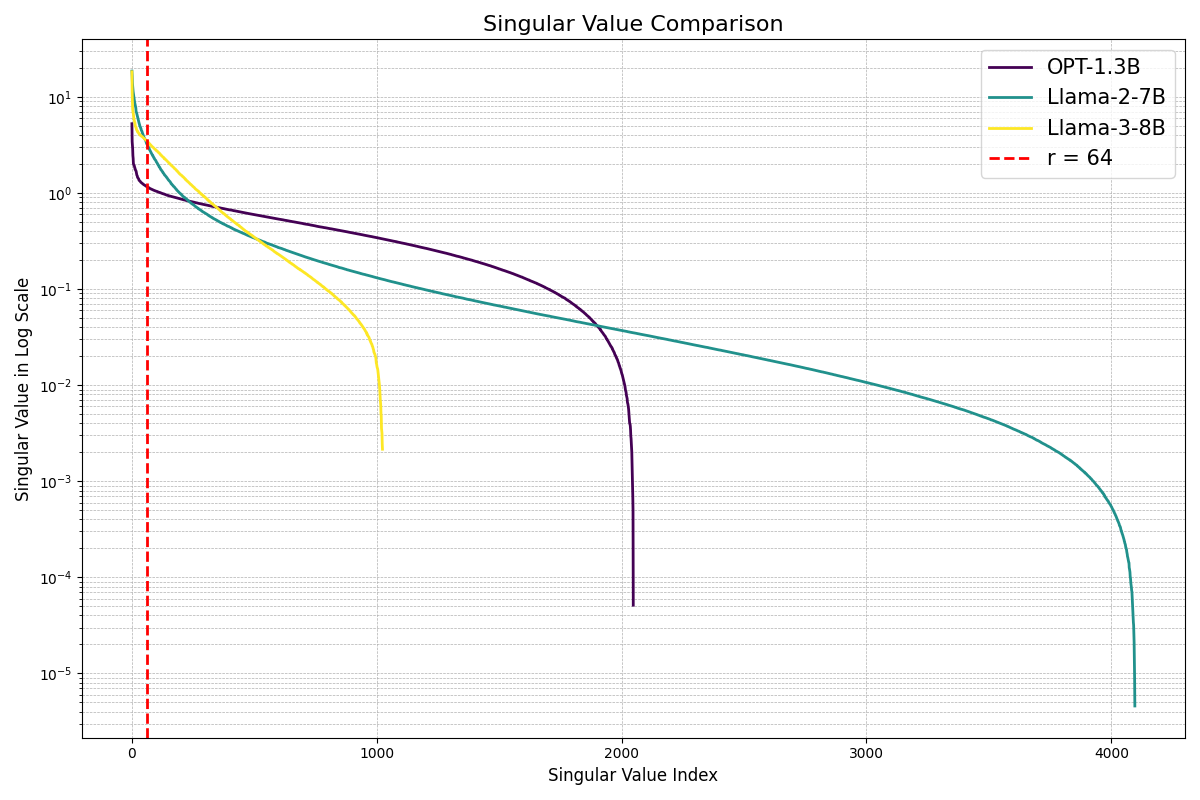} 
\captionof{figure}{Singular value distribution}
\label{fig:svd}
\end{figure}

\end{document}